\newcommand{\Sb}{\mathbb{S}}
\newcommand{\Si}{\mathcal{S}}
\newcommand{\Pit}{\mathcal{P}}
\newcommand{\Rb}{\mathbb{R}}
\newcommand{\Cb}{\mathbb{C}}
\newcommand{\Ai}{\mathcal{A}}
\newcommand{\Ui}{\mathcal{U}}
\newcommand{\ct}{\overline{\theta}}
\newcommand{\losst}{\tilde{\nabla} \ell}
\newcommand{\loss}{\nabla_\theta \ell}
\newcommand{\act}{\textbf{\textit{act}}}
\newcommand\restr[2]{{
  \left.\kern-\nulldelimiterspace 
  #1 
  \vphantom{\big|} 
  \right|_{#2} 
  }}
\DeclareMathOperator*{\argmax}{argmax}
\newtheorem{theorem}{Theorem}
\newtheorem{lemma}{Lemma}
\newcommand{\ei}[1]{\boldsymbol{\mathcal{#1}}}
\title{ Deep Q-Learning: Theoretical Insights from an Asymptotic Analysis
}
\author{Arunselvan Ramaswamy\footnote{A. Ramaswamy is with the Heinz-Nixdorf Institute and the Department of Computer Science, Paderborn University, 33098 Paderborn, Germany (e-mail: arunr@mail.upb.de).} 
and Eyke H{\"u}llermeier\footnote{E. H{\"u}llermeier is with the Institute of Informatics at the University of Munich, 80538 Munich , Germany (e-mail: eyke@ifi.lmu.de).}}
\begin{document}
\maketitle

\begin{abstract}
Deep Q-Learning is an important reinforcement learning algorithm, which involves training a deep neural network, called Deep Q-Network (DQN), to approximate the well-known Q-function. Although wildly successful under laboratory conditions, serious gaps between theory and practice as well as a lack of formal guarantees prevent its use in the real world. Adopting a dynamical systems perspective, we provide a theoretical analysis of a popular version of Deep Q-Learning under realistic and verifiable assumptions. More specifically, we prove an important result on the convergence of the algorithm, characterizing the asymptotic behavior of the learning process. Our result sheds light on hitherto unexplained properties of the algorithm and helps understand empirical observations, such as performance inconsistencies even after training. Unlike previous theories, our analysis accommodates state Markov processes with multiple stationary distributions. In spite of the focus on Deep Q-Learning, we believe that our theory may be applied to understand other deep learning algorithms
\end{abstract}

\section{Introduction} \label{sec_introduction}

Reinforcement Learning (RL) is an important branch of machine learning, which has received increasing attention in the recent past. Roughly speaking, it considers an autonomous agent interacting with a dynamic environment, and seeks to learn a policy (prescribing actions depending on the current state of the environment) maximizing the agent's welfare in the course of time. A popular variant of RL, called \emph{Deep Reinforcement Learning} (DeepRL), combines the fundamental principles of RL with the power of deep learning. DeepRL has exhibited tremendous empirical success in recent years in wide ranging fields, from games \cite{silver2017mastering} to self-driving cars \cite{lillicrap2016continuous}.

In this paper, we focus on the popular DeepRL algorithm \emph{Deep Q-Learning}, which was introduced in \cite{mnih2015human} and shown to achieve superhuman performance in playing ATARI video games. Q-learning is a specific approach to RL, which focuses on learning the so-called Q-function to evaluate state-action pairs. 
In Deep Q-Learning, this function is represented by a deep neural network, called the Deep Q-Network (DQN), and learning the optimal Q-function is accomplished by minimizing the squared Bellman loss (error). DQN training typically involves repeated interactions with a simulator, or the use of historical data. In spite of its undoubted potential, Deep Q-Learning is still lacking a solid theoretical foundation. This also explains, at least partly, its slow adoption for real-world applications, although it generally performs well in a laboratory setting. The lack of a comprehensive understanding of the training process also hampers the explanation of empirical findings, such as suboptimal performance even when training is deemed sufficient. 


First theoretical results include sufficient conditions for convergence of Deep Q-Learning, provided the DQN uses rectified linear units as activation functions \cite{yang2019theoretical}. The analysis requires strict conditions on the Bellman operator and the distribution of state Markov process. In \cite{zou2019finite}, a non-asymptotic finite sample analysis of Deep Q-Learning with linear function approximation (instead of deep neural network (DNN) approximation) is presented. While studies like these focus on sufficient conditions for convergence, the focus in \cite{achiam2019towards} is on characterizing conditions under which Deep Q-Learning is divergent. Although understanding divergence is of paramount importance, assuming linearity of the function approximator reduces the applicability of such results in real-world scenarios. This is because, in practice, deep neural networks, which are non-linear functions, are used as function approximators. There are many recent theoretical results that are based on the linearity of the function approximator, see for e.g., \cite{xu2020finite}, \cite{du2020agnostic} and \cite{chen2019finite}. In \cite{cai2019provably}, the topic of efficient exploration in policy optimization is explored from a theoretical perspective. While these preliminary results are important and interesting, they do not immediately apply to Deep Q-Learning \emph{as implemented in practice}, due to unrealistic simplifications and restrictive assumptions.

\paragraph{Our contributions.}
The performance of Deep Q-Learning strongly depends on the training procedure. Empirically, it has been observed that performance is great in some test scenarios and poor in others. The hitherto available theory does not explain this phenomenon,  nor does it account for other empirical observations of similar kind. The main contribution of this paper is a comprehensive analysis of Deep Q-Learning that provides such explanations\,---\,under assumptions that are practical and verifiable. 

We show that the squared Bellman loss is minimized over the set of state-action pairs, distributed in accordance with a measure obtained as a limit of a natural measure process associated with the training procedure. We also show that this limiting measure is stationary with respect to the state Markov process. Further, its empirical estimate can be used to retrain and boost performance. As stated earlier, the limiting measure is strongly shaped by the training process. It is worth mentioning that, unlike previous literature, our analysis allows for multiple stationary distributions of the state Markov process. 

The most popular implementation of Deep Q-Learning involves the use of a target network. The use of such a network is shown to improve learning stability. However, it can be shown that the convergence properties of Deep Q-Learning does not change with the use a target network. Since, we focus on convergence in this paper, and not stability, we do not consider implementations with target networks. More importantly, it has recently been shown in \cite{kim2019deepmellow} that Deep Q-Learning that uses the ``mellowmax'' operator, instead of the usual ``max'' operator eliminates the need for target networks. They show superior performance as compared to traditional Deep Q-Learning with target networks, in many benchmark scenarios. Although, we do not explicitly consider the algorithm described \cite{kim2019deepmellow}, through appropriate modifications of the loss function our analysis can be extended to encompass this scenario as well.

Another popular implementation involves the use of a buffer memory called the \emph{experience replay}. It stores past experiences for relearning purposes. The main analysis presented in Sections~\ref{sec_dqn} and \ref{sec_conv} do not account for the use of an experience replay. However, in Section~\ref{sec_replay}, we discuss the steps involved in extending our analysis to account for this. {We show that experience replay affects the quality of performance by shaping the limiting distribution}. Additionally, it may aid in stabilizing the DQN training. 

For our analyses, we utilize tools from the fields of stochastic approximation algorithms (SA) \cite{borkar2009stochastic, kushner2003stochastic}, stochastic processes \cite{durrett2010probability}, measure theory \cite{billingsley2013convergence}, and viability theory \cite{aubin2012differential}.

\section{PRELIMINARIES}\label{sec_prelima}
For a fairly detailed introduction to reinforcement learning, the reader is referred to Appendix~\ref{sec:preliminaries_RL}. In what follows, we discuss the architecture of Deep Q-Network.
\subsection{Deep Q-Network (DQN)} \label{DQN_prelim}
Since a DQN is essentially an artificial neural network, or simply a neural network (NN), we begin by describing one. In particular, we discuss the architecture of a \emph{fully connected feedforward network} with real-valued vector inputs. \emph{Activation functions} form the basic building blocks of an NN. The typical domain for an activation function $\sigma$ is $\Rb$, and its range $\mathcal{R}$ is usually a subset of $\Rb$, i.e., $\sigma: \Rb \to \mathcal{R} \subset \Rb$. Depending on whether the range of $\sigma$, $\mathcal{R}$, is compact or unbounded, it is said to be \emph{squashing} or \emph{non-squashing}, respectively. There are many activation functions, the following are a few examples considered in this paper:
(a) Sigmoid $\left[\nicefrac{1}{1 + e^{-x}} \right],$ 
(b) Hyperbolic Tangent $\left[\nicefrac{e^{x} - e^{-x}}{e^{x} + e^{-x}} \right],$ 
(c) Gaussian Error Linear Unit $\left[ x \mbox{$\int\limits_{-\infty} ^x$} \nicefrac{e^{-y^2 / 2}}{\sqrt{2 \pi}} \ dy \right],$ and
(d) Sigmoid Linear Unit $\left[ \nicefrac{x}{1 + e^{-x}} \right]$.

An NN is a collection of \emph{activations} that are arranged in a sequence of \emph{layers}, starting with an \emph{input} layer, then followed by one or more \emph{hidden} layers, and ending with the \emph{output} layer. An NN with two or more hidden layers is called a \emph{Deep Neural Network} (DNN). Figure~\ref{DQN} illustrates one such NN architecture. By convention, an NN is constructed from left to right starting with the input layer and ending with the output layer. Further, the layers are arranged in a feedforward architecture, in that any two successive layers constitute a \emph{complete bipartite graph} with edges directed from the left layer into the right.

\begin{figure}[H]
\includegraphics[width=.48\textwidth]{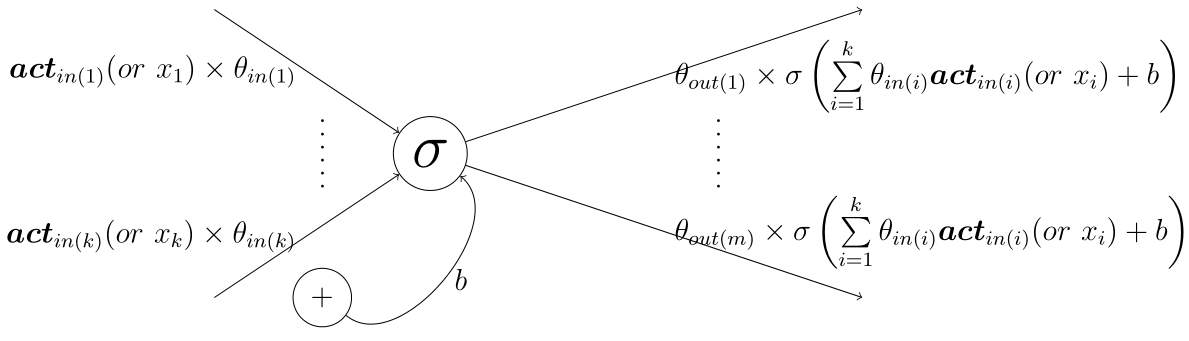}
\caption{Single activation from some layer}  \label{neuron}
\end{figure}

Figure~\ref{neuron} illustrates a single activation $\sigma$ within some layer. There are $k$ edges leading into and $m$ leading out of $\sigma$, where $m, k \ge 1$. When $\sigma$ is in the input layer, the in-edges connect the $k$ components of the input vector to it. As a part of other layers, the \emph{in-edges} connect the $k$ activation-outputs from the previous layer to its input. Further, each in-edge is associated with a weight that equals the product of the corresponding previous layer activation output $\act_{in(i)}$ (or input component $x_i$) and network-weight $\theta_{in(i)}$, $1 \le i \le k$. The input value to the activation is given by 
$$
\sum \limits_{i=1}^k \act_{in(i)} \theta_{in(i)} + b
$$
or $\sum_{i=1}^k x_i \theta_{in(i)} + b$, where $b$ is a tunable bias term. 
Suppose $\sigma$ is part of an input or hidden layer, then the edges leading out of it, the \emph{out-edges}, connect its output 
\begin{equation}\label{eq:output}
\sigma \left(\sum \limits_{i=1}^k \act_{in(i)} \theta_{in(i)} + b \right)
\end{equation} 
(or $\sigma (\sum_{i=1}^k x_i \theta_{in(i)} + b )$) to the input of the $m$ activations in the following layer. Finally, if $\sigma$ is part of the output layer, its output \eqref{eq:output}
is combined with the output from other activations that also belong to the outer layer, to obtain the required NN output. 
For more details the reader may refer to \cite{graupe2013principles, yegnanarayana2009artificial}.

\noindent
\emph{Note on tunable biases:} Subsequently, we assume that there are no tunable biases added to the activation inputs. In particular, we assume that the input is merely $\sum \theta_{in(i)} \act_{in(i)}$ $\text{(or } \sum \theta_{in(i)} x_i$ if the activation belongs to the input layer). We make this simplification for the sake of clarity in presentation. Our analysis will remain unaltered, except for minor bookkeeping, if one wishes to account for tunable biases.

\begin{figure}[h]
\begin{center}
\includegraphics[width=.48 \textwidth]{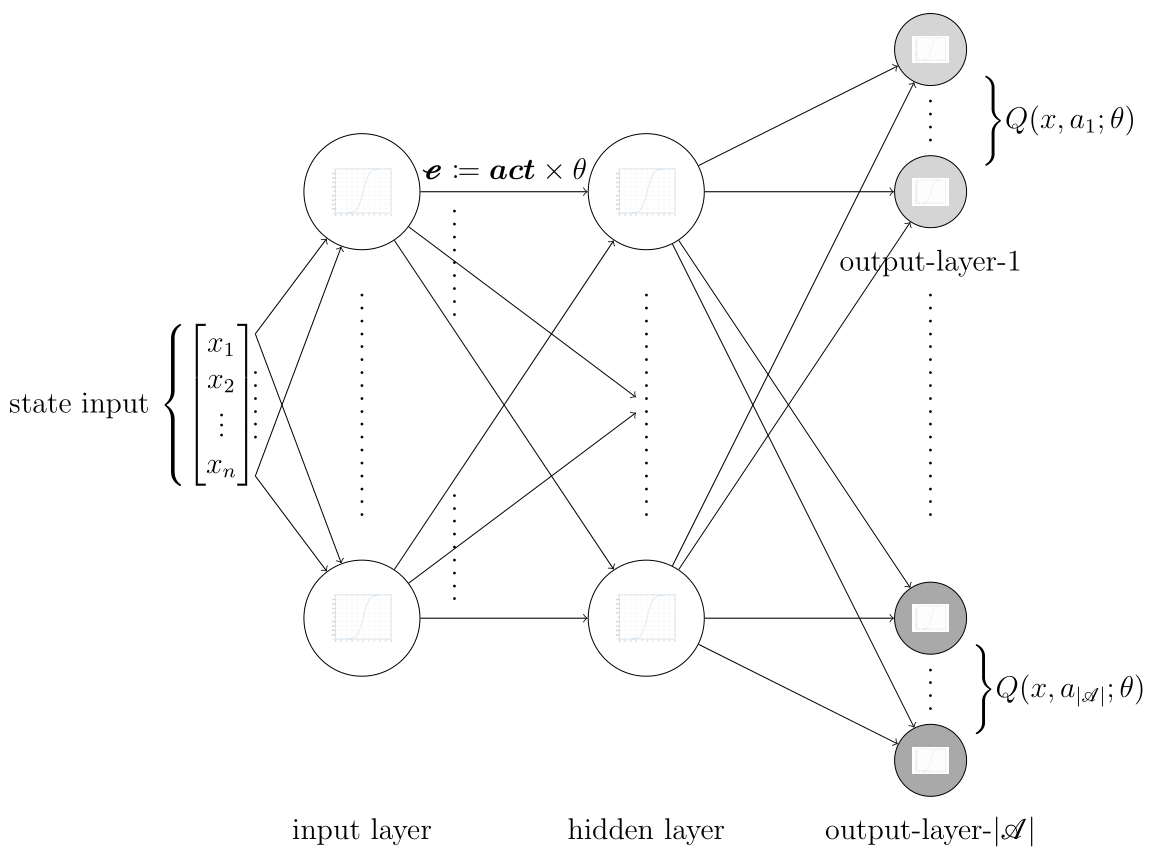}
\caption{Schematic Representation of a DQN}  \label{DQN}
\end{center}
\end{figure}

We are now ready to discuss the DQN architecture, also illustrated in Fig.~\ref{DQN}. Its input is the state vector $x \in \mathbb{S}$, and its output is a vector of dimension $| \mathcal{A} |$. The DQN output layer is a union of $| \mathcal{A} |$ separate (sub) output layers, one for each action. The output-layer-$i$ associated with action $a_i$, is fully connected to the previous hidden layer, see Fig.~\ref{DQN}. In particular, they are connected to the same layer. Let $l(a)$ be the number of activations in the output layer associated with action $a$, then $Q(x,a; \theta) \coloneqq \sum_{i=1}^{l(a)} \act_{a(i)} \theta_{a(i)}$, where $\act_{a(i)}$ is the activation-$i$ output and $\theta_{a(i)}$ is the associated network weight. \emph{Note that we use $\theta_{a(i)}$ and $\act_{a(i)}$, instead of merely using $\theta_i$ and $\act_i$, respectively, to emphasize the association with action.} 

In a nutshell, DQN is a parameterization of the vector $\left(Q^*(x,a)\right)_{a \in \mathcal{A}} $, where $Q^*$ is the optimal Q-function. In Deep Q-Learning, one updates the DQN weights $\theta \coloneqq \left(\theta_e \mid e \text{ is an edge in  the DQN} \right)$ iteratively, in order to find $\theta^*$ such that $Q(x,a; \theta^*) \approx Q^*(x,a)$, $\forall \ (x,a)  \in \Sb \times \Ai.$

\section{DEEP Q-LEARNING} \label{sec_dqn}
To minimize the squared Bellman loss, Deep Q-learning iterates the update 
\begin{equation}\label{dqn_1}
\theta_{n+1} \leftarrow \theta_n + \gamma(n) \nabla_\theta \ell(\theta_n, x_n, a_n) 
\end{equation}
of the DQN weight vector $\theta \in \mathbb{R}^d$, where the following notation is used:
\begin{enumerate}[label=(\roman*)]
\item $\theta_n \in \mathbb{R}^d$, $x_n \in \mathbb{S}$, and $a_n \in \Ai$ for $n \in \mathbb{N}_0$. The state space $\mathbb{S}$ is assumed to be $\mathbb{R}^n$ for some $n \ge1$, and $\Ai$ is a finite set of actions. 
\item The loss gradient of \eqref{dqn_1} 
is given by 
\begin{align}\label{eq:lgr}
& \nabla_\theta \ell(\theta_n, x_n, a_n)  =  \nabla_\theta Q(x_n,  a_n; \theta_n) \times \\
& \quad (r(x_n , a_n) + \alpha \max \limits_{a' \in \mathcal{A}} Q(x_{n+1}, a'; \theta_n) -  Q(x_n,  a_n; \theta_n) )  \nonumber \, ,
\end{align}
where $\alpha$ is the discount factor. Since $a_n$ is the action taken at time $n$, $\nabla_\theta \ell(\theta_n,  x_n, a_n)$ denotes the loss-gradient back-propagated via $a_n$.
\item $\gamma(n)$ is the step size sequence satisfying the standard assumptions of non-summability and square summability.
\end{enumerate}

Note that the loss gradient is calculated using the \emph{sample value} $\max \limits_{a' \in \mathcal{A}} Q(x_{n+1}, a'; \theta_n)$ instead of the \emph{expected value} $\int \max \limits_{a' \in \mathcal{A}} Q(x', a'; \theta_n) p(dx' \mid x,a)$. This is because the transition kernel $p$ is unknown in real applications. The algorithm \emph{observes} the next state $x_{n+1}$ and the reward $r(x_n,a_n)$, after applying $a_n$ in state $x_n$.

The state Markov process is determined by the transition kernel $p(dy \mid x, a)$. In training, actions are picked through a policy that \emph{exploits} the approximation capability of DQN, while simultaneously \emph{exploring} new actions. In other words, the transition kernel is indirectly influenced by the network weights. Hence, we denote the controlled transition kernel by $p(dy \mid x, a, \theta)$. For fixed weights $\theta$ and a fixed stochastic policy $\pi_\theta$, the transition kernel is given by
$$
\tilde{p}_\theta(dy \mid x) = \sum \limits_{a \in \Ai} p(dy \mid x,a, \theta) \pi_\theta (x, da) \, .
$$
The policy is subscripted with $\theta$ to emphasize that it depends on the network weights (via exploitation). Let us suppose that $\pi_\theta$ only exploits and does not explore. Then the above stochastic policy is the Dirac measure given by $\pi_\theta(x, da) = \delta_{\argmax \limits_{a \in \Ai} }Q(x, a; \theta)$. Furthermore, the above kernel becomes
$$
\tilde{p}_\theta(dy \mid x) = p \left(dy \bigm\vert x,\argmax \limits_{a \in \Ai} Q(x, a; \theta),  \theta \right) \, .
$$

\noindent
\emph{Note on notation:} We use $dy$ and $da$ (instead of just $y$ and $a$) to represent the variables on which $p$ and $\pi_\theta$, respectively, define distributions, so as to easily distinguish the variable under consideration.
Suppose $\pi_\theta$ is a Dirac measure. Then, through a slight abuse of notation, we use $\pi_\theta (x)$ to represent $\argmax \limits_{a \in \Ai} Q(x, a; \theta)$.

\subsection{Assumptions}
The assumptions required to analyze \eqref{dqn_1} are as follows:

\begin{itemize}
\item[(A1)] $\gamma(n) > 0$ for all $n \ge 0$, $\sum_{n \ge 0} \gamma(n) = \infty$ and $\sum_{n \ge 0} \gamma(n)^2 < \infty$. Further, the sequence monotonically decreasing.
\item[(A2)] (a) $\sup_{n \ge 0} \lVert \theta_n \rVert_2 < \infty$ a.s., (b) $\sup_{n \ge 0} \lVert x_n \rVert_2 < \infty$ a.s.
\item[(A3)] The state transition kernel $p(\cdotp \mid x, a)$ is continuous in the $x$-coordinate.
\item[(A4)] The DQN is composed of activation functions that are squashing and twice continuously differentiable.
\item[(A5)] The reward function $r: \Sb \times \Ai \to \Rb$ is continuous.
\end{itemize}

The first assumption regarding the step size sequence (learning rate) is standard in the literature. Recall that the loss gradient in \eqref{dqn_1} is calculated using samples that are supposed to approximate expected values. The resulting sampling errors are controlled using step sizes that are square summable. The stability assumption (A2) is essential for analyzing the long-term behavior of \eqref{dqn_1}. 

Consider two different but ``closely neighbored'' states in the environment. Assumptions (A3) and (A5) state that the consequences (successor states and rewards, respectively) of taking the same action in these states are similar. These assumptions are not only natural, but also ensure the performance of approximation-based algorithms like Deep Q-learning. As long as the state-action pairs encountered during training are a rich enough representation of $\Sb \times \Ai$, (A3) and (A5) facilitate good approximation of the Q-function. 


The assumption of squashing activations (A4) is mainly made for the sake of clarity of presentation and can easily be relaxed.  
An extension to general (twice continuously differentiable) activations is provided in Section~\ref{sec_activation}.

\subsection{Properties of the loss gradient} \label{sec_grad_prop}
The aim of this section is to prove certain useful properties that facilitate an abstract view of the loss gradient, with lesser ``moving parts''. In particular,  we show that $\loss$ is (A) locally Lipschitz continuous in the $\theta$-coordinate, and (B) continuous in the $x$ and $a$-coordinates. Suppose we equip $\Ai$ with the discrete topology. Then, since $\Ai$ is a finite set, the resulting discrete space is compact, so that $\loss$ is trivially continuous in the $a$-coordinate. As for the rest, we relegate a couple of technical lemmata to Appendix~\ref{sec:aux_lemmas}, and summarize the required results in Lemma~\ref{grad_loss_continuity} below.

\noindent
Let us define the sequence $\{M_n\}_{n \ge 0}$ as follows: 
\begin{equation*}
\begin{split}
& M_{n} := \sum \limits_{m=0}^{n-1} \gamma(m) \psi_m, \ n \ge 0, \text{ where } \\
& \psi_m \coloneqq  \alpha [ \max \limits_{a \in \Ai} Q(x_{m+1}, a; \theta_m) - \\ 
& \quad \int \max \limits_{a \in \Ai} Q(x,a; \theta_m) p(dx \mid x_{m}, a_m, \theta_m) ] \nabla_\theta Q(x_m, a_m; \theta_m) \, . 
\end{split}
\end{equation*} 
It can be shown that $\{M_n\}_{n \ge 0}$ is a zero-mean martingale with respect to the filtration $\mathcal{F}_{n-1} \coloneqq \sigma \langle x_m, a_m, \theta_m \mid m \le n \rangle$, $n \ge 1$. Recall that we assume stability of \eqref{dqn_1} and the state sequence, i.e., 
$\sup_{n \ge 0} \ \lVert \theta_n \rVert < \infty$ and $\sup_{n \ge 0} \ \lVert x_n \rVert < \infty$ a.s. This, together with the twice continuous differentiability of $Q$ in the $\theta$-coordinate (shown in Lemma~\ref{grad_2cd}, Appendix~\ref{sec:aux_lemmas}), lets us conclude that $\sup_{n \ge 0} \left| Q(x_n, a_n; \theta_n) \right| < K_1 < \infty$, and that $\left\lVert \nabla_\theta Q(x_n, a_n; \theta_n) \right\rVert < K_2 < \infty$, where $K_1$ and $K_2$ are possibly sample-path dependent. Hence $ \sup_{n \ge 0} \left\lVert \psi_n \right\rVert \le K < \infty$, where $K$ may again be sample-path dependent. Finally, the square summability of the step size sequence, assumption (A1), implies that $\sum_{m=0}^n \gamma(m)^2 \lVert M_n \rVert^2 < \infty$ a.s. Convergence of the martingale sequence $\{M_n\}_{n \ge 0}$ follows from the martingale convergence theorem, see \cite{durrett2010probability}.

Recall the loss gradient $\nabla_\theta \ell(\theta_n, x_n, a_n)$ given by \eqref{eq:lgr}, 
%
and let us rewrite it using the definition of $\psi_n$ as
\begin{equation} \label{grad_loss_1}
\begin{split}
& \nabla_\theta \ell(\theta_n, x_n, a_n) =  \big(r(x_n , a_n) + \\ 
& \qquad \alpha \int \max \limits_{a' \in \mathcal{A}} Q(y, a'; \theta_n) p(dy \mid x_n, a_n, \theta_n) \\  
& \qquad - Q(x_n,  a_n; \theta_n) \big)  \nabla_\theta Q(x_n,  a_n; \theta_n)  + \psi_n .
\end{split}
\end{equation}
Hence, \eqref{dqn_1} becomes
\begin{equation} \label{dqn_2}
\theta_{n+1} \leftarrow \theta_n + \gamma(n) \left[ \nabla_\theta \hat{\ell}(\theta_n, x_n, a_n)  + \psi_n \right] \, ,
\end{equation}
\begin{equation*}
\begin{split}
\text{where }\nabla_\theta \hat{\ell}(\theta_n, x_n, a_n) \coloneqq  (r(x_n , a_n) + \\ \alpha \int \max \limits_{a' \in \mathcal{A}} Q(y, a'; \theta_n) p(dy \mid x_n, a_n, \theta_n) \\ - Q(x_n,  a_n; \theta_n) )  \nabla_\theta Q(x_n,  a_n; \theta_n).
\end{split}
\end{equation*}
Since the martingale sequence $\{M_n\}_{n \ge 0}$ converges a.s., the impact of $\psi_n$ vanishes asymptotically. In other words, \eqref{dqn_1} and \eqref{dqn_2} are asymptotically identical to (have the same limiting set as)
\begin{equation} \label{dqn_3}
\theta_{n+1} \leftarrow \theta_n + \gamma(n) \left[ \nabla_\theta \hat{\ell}(\theta_n, x_n, a_n) \right].
\end{equation}
\emph{Note on notation:} Rather than keeping track of two versions of the loss gradients, $\nabla_\theta \ell$ and $\nabla_\theta \hat{\ell}$ from equations \eqref{dqn_1} and \eqref{dqn_3}, respectively, we redefine $\nabla_\theta \ell \coloneqq \nabla_\theta \hat{\ell}$. With this slight abuse of notation, we hope to avoid unnecessary confusion. The reader does not need to track two different losses. In our subsequent analysis, when we refer to \eqref{dqn_1}, the associated loss gradient is
\begin{equation} \label{grad_loss_new}
\begin{split}
\nabla_\theta \ell(\theta_n, x_n, a_n) \coloneqq  (r(x_n , a_n) + \alpha \\ \int \max \limits_{a' \in \mathcal{A}} Q(y, a'; \theta_n) p(dy \mid x_n, a_n, \theta_n)  - \\ Q(x_n,  a_n; \theta_n) )  \nabla_\theta Q(x_n,  a_n; \theta_n).
\end{split}
\end{equation}
\normalsize

\begin{lemma} \label{grad_loss_continuity}
$\nabla_\theta \ell(\theta_n, x_n, a_n)$, redefined as \eqref{grad_loss_new}, is continuous and locally Lipschitz continuous in the $\theta$-coordinate.
\end{lemma}
\begin{proof}
For the proof, one can combine the consequences of (i) Lemmas~\ref{gp_bound}, \ref{grad_2cd} and \ref{grad_Qexp} (see Appendix~\ref{sec:aux_lemmas}), (ii) assumption (A5), i.e., the continuity of the reward function $r$, and (iii) the fact that the sum and product of continuous and locally Lipschitz continuous functions are also continuous and locally Lipschitz continuous, respectively.
\end{proof}

The Lipschitz constant from the above statement is local and changes with $\theta$.
However, as discussed before, following the proof of Lemma~\ref{grad_2cd} (see Appendix~\ref{sec:aux_lemmas}), it also depends on $x$. If the domain of a locally Lipschitz continuous function is restricted to a compact subset, then the restricted function is Lipschitz continuous. Assumption (A2) states that $\sup_{n \ge 0} \ \lVert \theta_n \rVert_2 < \infty$ and $\sup_{n \ge 0} \ \lVert x_n \rVert_2 < \infty$ a.s. This can be used to conclude that $\loss$ is Lipschitz continuous in the $\theta$-coordinate, when restricted to an appropriate compact subset of $\Rb^d \times \Sb$. We note that the Lipschitz constant may be sample-path dependent and refer to the proof of Lemma~1 in \cite{ramaswamy2019optimization}, where something very similar is shown.
\section{CONVERGENCE ANALYSIS} \label{sec_conv}
To analyze the long-term behavior of \eqref{dqn_1}, we first construct an associated continuous-time trajectory with identical limiting behavior. Then, instead of \eqref{dqn_1}, we may analyze the continuous-time trajectory. 

First, we divide the time axis $[0, \infty)$ using the given step size sequence as follows:
$$t_n = 0 \text{ and } t_n = \sum_{m=0}^{n-1} \gamma(m) \text{ for } n \ge 1.
$$
We now define the required trajectory $\ct \in C([0, \infty), \mathbb{R}^d)$ as follows:
\begin{enumerate}[label=(\alph*)]
\item $\ct(t_n) = \ct_n$,  $n \ge 0$,
\item $\ct(t) = \ct(t_n) + \frac{t - t_n}{t_{n+1} - t_n} \left[ \ct(t_{n+1}) - \ct(t_n) \right]$ for $t \in (t_n, t_{n+1})$ and $n \ge 0$.
\end{enumerate}

As the sequence of actions taken are directly linked to the DQN-weights $\theta$ via ``exploitation'', we need to better understand them. To this end, we define the following measure process:
$$
\mu(t) = \delta_{(x_n, a_n)}, \ t \in [t_n, t_{n+1}),
$$
where $\delta_{(x,a)}$ is the Dirac measure that places mass $1$ on the state-action pair $(x,a) \in \Sb \times \Ai$.
Hence $\mu: [0, \infty) \to \Pit (\Sb, \Ai)$ defines a process of probability measures on $\Sb \times \Ai$. For our analysis, we need to define limits for the ``left-shifted'' measure process $\{\mu([t_n, \infty))\}_{n \ge 0}$. For that purpose, we first define a metric space (similar to the one from \cite{borkar2006stochastic}) consisting of such measure processes below. 

To start with, we observe that the action space $\Ai$ is \emph{compact metrizable}, as it is discrete and finite. As for $\Sb$, recall our assumption $\Sb = \Rb^n$. Thus, it follows from the Alexandroff extension that $\Sb$ is one-point compactifiable. In particular, the inverse stereographic projection $S^{-1}: \Sb \to \Si^n$ is such that $\Si^n \setminus S^{-1}(\Sb) = (0, \ldots, 0, 1)$, where $\Si^n$ represents the $(n+1)$-dimensional Hausdorff compact sphere of radius $1$ centered at the origin, and $(0, \ldots, 0, 1)$ is the ``north pole''. In other words, the inverse stereographic projection is the required compactification embedding of $\Sb$ into $\Si^n$, see \cite{munkres}. 

Every measure $\nu \in \Pit(\Sb \times \Ai)$ has a push forward counterpart in $\Pit(\Si ^n \times \Ai)$. It  places mass $0$ on $(0,\ldots, 0, 1) \times \Ai$. \emph{Moving forward, note that we shall use the same symbol to represent both the measure and its push forward counterpart}. Also note that $\Si ^n \times \Ai$ is \emph{compact Hausdorff in the product topology}.



Let us define $\Ui$ to be the space of all measurable functions $\nu(\cdotp) = \nu(\cdotp, dx, da)$ from $[0, \infty)$ to $\Pit (\Si ^n \times \Ai)$.

\begin{lemma}
$\Ui$ is compact metrizable. Further, this metric coincides with the coarsest topology that renders continuous the map $$
\nu \mapsto \int \limits_0 ^T g(t) \int f d\nu(t) \ dt,
$$
for all, $T > 0$, $f \in \mathbb{C}(\Si ^n \times \Ai)$ and $g \in \mathbb{L}^2 ([0,T], \Rb)$. 
\end{lemma}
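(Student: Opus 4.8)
The plan is to realize $\Ui$ as a weak-$*$ closed subset of the closed unit ball of the dual of a separable Banach space, so that compactness comes from Banach--Alaoglu and metrizability from separability of the predual. Put $K \coloneqq \Si^{n}\times\Ai$; by the remarks preceding the statement $K$ is compact metrizable, hence $\mathbb{C}(K)$ is a separable Banach space, and by the Riesz representation theorem $\mathbb{C}(K)^{*}=\mathcal{M}(K)$, the space of finite signed Borel measures on $K$ with the total variation norm. Consequently $E\coloneqq\mathbb{L}^{1}([0,\infty),\mathbb{C}(K))$ is separable, and by the standard duality for Bochner spaces $E^{*}$ is (identified with) the space of equivalence classes of essentially bounded, weak-$*$ measurable maps $\nu\colon[0,\infty)\to\mathcal{M}(K)$ under the pairing $\langle\nu,h\rangle=\int_{0}^{\infty}\int_{K}h(t)\,d\nu(t)\,dt$. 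Since each $\nu\in\Ui$ is valued in $\Pit(K)\subset\{\mu\in\mathcal{M}(K):\lVert\mu\rVert\le 1\}$, we have $\Ui\subseteq B^{*}$, the closed unit ball of $E^{*}$; here, as is implicit in the definition, elements of $\Ui$ agreeing Lebesgue-a.e.\ are identified, which is precisely what makes the resulting topology Hausdorff.

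The key step is to show that $\Ui$ is weak-$*$ closed in $B^{*}$. By Banach--Alaoglu $B^{*}$ is weak-$*$ compact, and since $E$ is separable $B^{*}$ is moreover weak-$*$ metrizable; a weak-$*$ closed subset of it is therefore itself compact metrizable. Suppose then $\nu_{\alpha}\to\nu$ in $B^{*}$ weak-$*$ with $\nu_{\alpha}\in\Ui$ for all $\alpha$. Fix a countable set $\{f_{k}\}$ dense in the positive cone of $\mathbb{C}(K)$; testing against $h(t)=g(t)f_{k}$ with $g\in\mathbb{L}^{1}$, $g\ge 0$, compactly supported, gives $\int_{0}^{\infty}g(t)\langle\nu(t),f_{k}\rangle\,dt\ge 0$ for all such $g$, hence $\langle\nu(t),f_{k}\rangle\ge 0$ for a.e.\ $t$; intersecting the null sets over $k$ shows $\nu(t)\ge 0$ for a.e.\ $t$. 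Testing in the same way against the constant function $\mathbf{1}$ gives $\langle\nu(t),\mathbf{1}\rangle=1$ for a.e.\ $t$. Hence $\nu(t)\in\Pit(K)$ for a.e.\ $t$, i.e.\ $\nu\in\Ui$. \emph{This closedness verification — pinning down positivity and unit mass a.e.\ through a countable family of test functions — is the main obstacle; the remaining steps are bookkeeping.}

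Finally I would identify the weak-$*$ topology on $\Ui$ with the stated coarsest topology. By definition the weak-$*$ topology is the coarsest one making every $\nu\mapsto\langle\nu,h\rangle$, $h\in E$, continuous; since the linear span of the elementary tensors $t\mapsto g(t)f$, with $f\in\mathbb{C}(K)$ and $g$ a compactly supported scalar function, is dense in $E=\mathbb{L}^{1}([0,\infty),\mathbb{C}(K))$, and since $\lvert\langle\nu,h\rangle-\langle\nu,h'\rangle\rvert\le\lVert h-h'\rVert_{E}$ uniformly over $\nu\in\Ui$, this topology is already generated by the functionals $\nu\mapsto\int_{0}^{T}g(t)\int f\,d\nu(t)\,dt$ with $T>0$, $f\in\mathbb{C}(K)$ and $g\in\mathbb{L}^{1}([0,T])$. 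It remains to observe that replacing $\mathbb{L}^{1}([0,T])$ by $\mathbb{L}^{2}([0,T])$ generates the same topology: $t\mapsto\int f\,d\nu(t)$ is bounded by $\lVert f\rVert_{\infty}$ uniformly in $\nu$, and $\mathbb{L}^{2}([0,T])$ is dense in $\mathbb{L}^{1}([0,T])$, so each $\mathbb{L}^{1}$-functional is a uniform-over-$\Ui$ limit of $\mathbb{L}^{2}$-functionals and conversely. This yields the claimed coincidence. (A duality-free alternative is to embed $\Ui$ into $\prod_{k,m}\{\eta\in\mathbb{C}([0,m]):\eta(0)=0,\ \mathrm{Lip}(\eta)\le\lVert f_{k}\rVert_{\infty}\}$ via $\nu\mapsto\big(s\mapsto\int_{0}^{s}\int f_{k}\,d\nu(t)\,dt\big)_{k,m}$, each factor compact by Arzel\`a--Ascoli and the map a homeomorphism onto its image by the same density and uniform-bound arguments plus Lebesgue differentiation for injectivity; but then one must reconstruct $\nu(t)$ from the a.e.\ derivatives $\eta_{k}'(t)$ to see the image is closed, which is the analogue of the closedness step above.)
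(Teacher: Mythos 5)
Your proof is correct and is essentially the same argument as the paper's: the paper simply defers to Lemma~3 of Borkar (2006), whose proof is exactly this identification of $\Ui$ with a weak-$*$ closed subset of the unit ball of $\bigl(\mathbb{L}^{1}([0,\infty),\mathbb{C}(\Si^{n}\times\Ai))\bigr)^{*}$, with compactness from Banach--Alaoglu and metrizability from separability of the predual. Your write-up in fact supplies the details (the a.e.\ positivity/unit-mass closedness check and the $\mathbb{L}^{1}$-versus-$\mathbb{L}^{2}$ equivalence of the generating functionals) that the paper leaves to the reference.
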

\begin{proof}
By emulating the proof of Lemma~3 in \cite{borkar2006stochastic} with $`` \Si ^n \times \Ai"$ replacing $`` \overline{S} "$, and making appropriate modifications, the required proof is obtained. We do not repeat it here, to avoid redundancies.
\end{proof}

Define $\losst (\theta, \nu) \coloneqq \int \loss (\theta, x, a)\ \nu(dx, da)$, where $\nu \in \Pit (\Sb, \Ai)$. Lemma~\ref{grad_loss_continuity} implies that \emph{$\losst$ is continuous in both coordinates and locally Lipschitz continuous in the $\theta$-coordinate}. Further, $ \lVert \losst (\theta, \nu) \rVert \le K (1 + \lVert \theta \rVert)$, i.e., its growth is bounded as a function of $\theta$ alone. Let us also define the following sequence of trajectories in $\Cb([0, \infty), \mathbb{R}^d)$: $\theta^n (t)  = \ct(t_n) + \int_{0}^t \losst (\theta^n(s), \mu^n (s)) \ ds$, where $\mu^n(t) \coloneqq \mu(t_n + t)$, $t \ge 0$ and $n \ge 0$. In other words, we consider solutions to the set of non-autonomous ordinary differential equations: $\left\{\dot{\theta}^n(t) = \losst(\theta^n(t), \mu^n(t)) \right\}_{n \ge 0}$. As stated earlier, to understand the long-term behavior of \eqref{dqn_1}, one can study the behavior of the limit of sequence $\{\ct([t_n, \infty))\}_{n \ge 0}$, in $\Cb([0, \infty), \mathbb{R}^d)$ as $n \to \infty$. We can show the following property.
\begin{lemma} \label{lemma:trajectory_track}
For every $T > 0$, 
$$ 
\lim \limits_{n \to \infty} \sup \limits_{t \in [0,T]} \lVert \ct(t_n + t) - \theta^n(t) \rVert = 0 \enspace .
$$
\end{lemma} 
\begin{proof}
Please refer to Appendix~\ref{sec:trajectory_track} for a proof.
\end{proof}
 
Then, instead of \eqref{dqn_1} or the associated trajectory $\overline{\theta}$, we could focus on the sequence of trajectories $\{ \theta^n([0, \infty))\}_{n \ge 0}$. Now we may tap into the rich literature of tools and techniques available from viability theory \cite{aubin2012differential, aubin2011viability}.

The family of trajectories $\{\theta^n([0, \infty))\}_{n \ge 0}$, in $\Cb([0, \infty), \Rb ^d)$, is equicontinuous and point-wise bounded. It follows from the Arzela-Ascoli theorem  \cite{billingsley2013convergence} that it is sequentially compact. Note that the topology of $\Cb([0, \infty), \Rb ^d)$ is the one induced by the topologies of $\Cb([0, T], \Rb ^d)$ for every $0 < T <  \infty$. Now, let us consider the family $\{\mu^n\}_{n \ge 0} \subset \Ui$. As $\Ui$ is a compact metric space, $\{\mu^n\}_{n \ge 0}$ is sequentially compact. Hence, there is a common subsequence $\{m(n)\} \subset \{n\}$ such that $\mu^{m(n)} \to \mu^\infty$ in $\Ui$ and $\theta^{m(n)} \to \theta^\infty$ in $\Cb([0, \infty), \Rb ^d)$. \emph{With a slight abuse of notation, we have $\mu^{n} \to \mu^\infty$ in $\Ui$ and $\theta^{n} \to \theta^\infty$ in $\Cb([0, \infty), \Rb ^d)$.} In other words, the sequences $\mu^n$ and $\theta^n$ are convergent in their respective spaces. 

Below we state another important result, namely that convergence of the measure process in $\Ui$ implies convergence in distribution of the corresponding measure sequence, at every point in time.
\begin{lemma} \label{ana_mu}
If $\mu^n \to \mu^\infty$ in $\Ui$, then a.e.\ $\mu^n(t) \to \mu^\infty(t)$ in $\Pit(\Sb \times \Ai)$ for $t\in [0, \infty)$.
\end{lemma}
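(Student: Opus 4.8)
The plan is to reduce convergence in $\Pit(\Sb\times\Ai)$ at a fixed time to the convergence of countably many scalar integrals, and then to extract that pointwise information from the $\Ui$-convergence. Since $\Si^n\times\Ai$ is compact metric, $\Cb(\Si^n\times\Ai)$ is separable; fix a countable dense set $\{f_k\}_{k\ge 1}$ and an increasing sequence $T_j\uparrow\infty$. As all the $\mu^n(t)$ and $\mu^\infty(t)$ are probability measures, a standard $3$-$\varepsilon$ argument ($\big|\int f\,d\nu-\int f\,d\nu'\big|\le 2\|f-f_k\|_\infty+\big|\int f_k\,d\nu-\int f_k\,d\nu'\big|$) shows that to obtain $\mu^n(t)\to\mu^\infty(t)$ weakly it suffices to prove $\int f_k\,d\mu^n(t)\to\int f_k\,d\mu^\infty(t)$ for every $k$. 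I would also record that each push-forward $\mu^n(t)$ assigns mass $0$ to the north pole, and that by stability (A2) the states $x_n$ eventually lie in a (sample-path-dependent) compact subset of $\Sb$, so their stereographic images stay in a compact subset of $\Si^n$ bounded away from $(0,\dots,1)$; hence every weak subsequential limit of $\mu^n(t)$ still gives mass $0$ to the north pole and therefore genuinely lies in $\Pit(\Sb\times\Ai)$. With that, weak convergence on $\Si^n\times\Ai$ and convergence in $\Pit(\Sb\times\Ai)$ coincide for the measures at hand.

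Now fix $k$ and $j$ and set $h_n:=\big(t\mapsto\int f_k\,d\mu^n(t)\big)$ and $h_\infty:=\big(t\mapsto\int f_k\,d\mu^\infty(t)\big)$. The $h_n$ are measurable and uniformly bounded on $[0,T_j]$ by $\|f_k\|_\infty$, and by the characterization of the topology on $\Ui$ (testing against every $g\in\mathbb{L}^2([0,T_j],\Rb)$) the hypothesis $\mu^n\to\mu^\infty$ in $\Ui$ says precisely that $\int_0^{T_j} g\,h_n\,dt\to\int_0^{T_j} g\,h_\infty\,dt$, i.e.\ $h_n\rightharpoonup h_\infty$ weakly in $\mathbb{L}^2([0,T_j])$. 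The whole lemma thus comes down to one point: upgrading, for each fixed $k$ and $j$, this weak $\mathbb{L}^2$-convergence to convergence for a.e.\ $t$, and then diagonalising over $k$ and $j$ to produce a single full-measure set $[0,\infty)\setminus N$ on which $h_n(t)\to h_\infty(t)$ for all $k$, whence $\mu^n(t)\to\mu^\infty(t)$ in $\Pit(\Sb\times\Ai)$ for $t\notin N$.

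This upgrade is the main obstacle, and it is genuinely delicate: weak $\mathbb{L}^2$-convergence does not by itself force the sequence $h_n$ — as opposed to suitable convex combinations of its tail, e.g.\ via the Banach--Saks property — to converge pointwise a.e., and Rademacher-type oscillations furnish sequences of Dirac-valued $\mu^n$ that converge in $\Ui$ yet have no subsequence converging pointwise. The honest way forward is to pass to a further subsequence (which, consistent with the paper's running abuse of notation, is again relabelled $\{\mu^n\}$): either one leans on additional structure of the occupation measures $\mu^n(t)=\mu(t_n+t)$ — boundedness of the state sequence together with the piecewise-constant, mesh-shrinking structure — to argue that along a subsequence $\mu^n(t)$ itself converges weakly for a.e.\ $t$, or one states the conclusion only along such a subsequence, which is all the subsequent viability-theoretic arguments actually need. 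Everything else in the proof — the separability reduction of the first paragraph, the no-escape-of-mass argument from (A2), and the identification of weak convergence on $\Si^n\times\Ai$ with convergence in $\Pit(\Sb\times\Ai)$ — is routine.
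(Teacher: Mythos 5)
Your reduction steps---the separability of $\Cb(\Si^n\times\Ai)$ and the $3\varepsilon$ argument needed to swap ``for every $f$, a.e.\ $t$'' into ``a.e.\ $t$, for every $f$'', the observation that (A2) prevents mass from escaping to the north pole, and the Portmanteau pull-back to $\Pit(\Sb\times\Ai)$---all match the paper's proof in substance (the paper is in fact less explicit than you about the quantifier swap, which genuinely requires a countable convergence-determining family). The real content of your proposal is the claim that the remaining step, upgrading weak $\mathbb{L}^2([0,T])$ convergence of $h_n(t)=\int f\,d\mu^n(t)$ to convergence for a.e.\ $t$, is the crux and fails for a general weakly convergent sequence. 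You have put your finger on exactly the point where the paper's own proof is weakest. The paper argues by contradiction: it posits an $\epsilon>0$ and a non-null $A\subset[0,T]$ on which one of four $\liminf$/$\limsup$ inequalities holds, treats the case $\liminf_n(h_n-h_\infty)>\epsilon$ on $A$ via Fatou/DCT with $g=\mathds{1}_A$, and declares the other cases ``identical.'' They are not. The negation of a.e.\ convergence only furnishes a non-null set on which $\limsup_n(h_n-h_\infty)>\epsilon$ or $\liminf_n(h_n-h_\infty)<-\epsilon$, and for those cases Fatou points the wrong way: from $\limsup_n(h_n-h_\infty)>\epsilon$ on $A$ one obtains an upper bound on $\limsup_n\int_A(h_n-h_\infty)\,dt$, not a lower bound, so no contradiction with $\int_A(h_n-h_\infty)\,dt\to 0$ ensues. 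Your Rademacher example makes this concrete: piecewise-constant occupation measures alternating between two Dirac masses on the shrinking mesh intervals can converge in $\Ui$ while $\mu^n(t)$ converges weakly for no $t$.

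One caveat about your proposed repairs: passing to a further subsequence does not by itself rescue the statement, since the Rademacher system has no a.e.\ convergent subsequence (any subsequence is again an i.i.d.\ symmetric family with $\limsup=+1$ and $\liminf=-1$ a.e.), so the existence of a subsequence along which $\mu^n(t)$ converges for a.e.\ $t$ is precisely what is in doubt, not something you may assert and relabel away. The Banach--Saks route is sound but proves a different statement (a.e.\ weak convergence of Ces\`aro averages of the measures), which would then have to be propagated through Lemma~\ref{ana_ode} and Lemma~\ref{ana_marginal_lemma}; alternatively one can avoid the pointwise claim entirely and work, as in \cite{borkar2006stochastic}, only with the time-integrated functionals that $\Ui$-convergence directly controls---which, as you note, is all the downstream arguments actually need. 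In short: your diagnosis is correct and identifies a gap that the paper's own proof shares, but your proposal as written does not close it.
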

\begin{proof}
We begin by recalling that the same notation is used to denote a measure on $\Sb \times \Ai$ and its push forward counterpart on $\Si ^n \times \Ai$. It follows from the definition of convergence in $\Ui$ that
$\int \limits_0 ^T g(s) \int f \mu^n (s, dx,da) \ ds \to \int \limits_0 ^T g(s) \int f \mu (s, dx,da) \ ds$ as $n \to \infty$, for every $g \in \mathbb{L}^2([0,T], \Rb)$ and $f \in \Cb(\Si ^n \times \Ai)$. We claim that this implies, for every $f \in \Cb(\Si ^n \times \Ai)$, $\int f \mu^n (s, dx,da) \to \int f \mu (s, dx,da)$ a.e. for  $s \in [0, \infty)$. Once this claim is proven, we can conclude that $s$-a.e.\ $\mu^n(s) \to \mu^\infty(s)$ in $\Pit(\Si ^n \times \Ai)$, which finally yields the lemma. 

To prove the claim, let us assume the contrary. In particular, we assume $\exists f \in  \Cb(\Si ^n \times \Ai)$, $T> 0$, $\epsilon > 0$ and a non-zero Lebesgue measure set $A \in \mathcal{B}([0,T]) $, such that at least one of the following properties holds for all $s \in A$:
\begin{enumerate}[label=(\alph*)]
\item $\liminf \limits_{n \to \infty}  \int f \mu^n (s, dx,da) - \int f \mu (s, dx,da)   > \epsilon$, 
\item $\liminf \limits_{n \to \infty}  \int f \mu^n (s, dx,da) - \int f \mu (s, dx,da)   < - \epsilon$, 
\item $\limsup \limits_{n \to \infty}  \int f \mu^n (s, dx,da) - \int f \mu (s, dx,da)   > \epsilon$, 
\item $\limsup \limits_{n \to \infty}  \int f \mu^n (s, dx,da) - \int f \mu (s, dx,da)   < -\epsilon$. 
\end{enumerate}
We only present arguments for case (a), as the corresponding ones for the others are identical. Since $f$ is bounded, we apply the Dominated Convergence Theorem (DCT) \cite{durrett2010probability} to conclude that
\begin{equation*}
\begin{split}
\liminf \limits_{n \to \infty} \int \limits_0 ^T \mathds{1}_{A} \left[ \int f \mu^n (s, dx,da) - \int f \mu (s, dx,da) \right] ds \\  > \epsilon \ l(A) > 0,
\end{split}
\end{equation*}
where $l(A)$ denotes the Lebesgue measure of $A$. This directly contradicts the definition of convergence of measures in $\Ui$. 

It is left to show that $\mu^n(t) \to \mu^\infty(t)$ in $\Pit(\Sb \times \Ai)$ a.e. for $t \in [0, \infty)$. To do this, we pick $t \in [0, \infty)$ such that $\mu^n(t) \to \mu^\infty(t)$ in $\Pit(\Si^n \times \Ai)$ and show that their pull back versions converge in  $\Pit(\Sb \times \Ai)$. This is done by showing that $\limsup \limits_{n \to \infty} \mu^n(t, C) \le \mu^\infty(t, C)$ for every closed set $C \in \mathcal{B}(\Sb \times \Ai)$ (Portmanteau theorem \cite{billingsley2013convergence}). 

We first observe that the measures $\{\mu^n(t)\}_{0 \le n \le \infty}$ are tight as a consequence of (A2). Hence they place a mass of $0$ on $(0,\ldots,0, 1) \times \Ai$. If we restrict these measures to $S^{-1}(\Sb) \times \Ai$, then $\restr{\mu^n}{S^{-1}(\Sb) \times \Ai} \overset{\text{d}}{\implies} \restr{\mu^\infty}{S^{-1}(\Sb) \times \Ai}$. Next, we consider an arbitrary closed subset $C \in \mathcal{B}(\Sb \times \Ai)$. Since the stereographic projection is bicontinuous, $\hat{C} \coloneqq \{(S^{-1}(x), a) \mid (x,a) \in C\}$ is closed in $S^{-1}(\Sb) \times \Ai$, equipped with subspace topology (with respect to $\Si^n \times \Ai$). Clearly, $ \limsup \limits_{n \to \infty} \mu^n(t, \hat{C}) \le \mu^\infty(t, \hat{C})$. Now, as $\mu^n(t, \hat{C})$ is the push forward measure of $\mu^n(t, C)$ for all $0 \le n \le \infty$, we obtain the required result.
\end{proof}
We can use one of the many available measurable selection theorems \cite{wagner1977survey} to drop the a.e.\ clause in the statement of Lemma~\ref{ana_mu}. Hence, we have hitherto shown that $\theta^n \to \theta^\infty$ in $\Cb ([0, \infty), \Rb^d)$ and $\mu^n(s) \overset{\text{d}}{\implies} \mu^\infty(s)$ for all $s \in [0, \infty)$. We now need to show that $\theta^\infty$ is a solution of $\dot{\theta}(t) = \losst (\theta(t), \mu^\infty (t))$. Then, one can study the limiting behavior of a solution to the ODE $\dot{\theta}(t) = \losst (\theta(t), \mu^\infty (t))$, to understand the long-term behavior of Deep Q-Learning given by \eqref{dqn_1}.
\begin{lemma} \label{ana_ode}
$\theta^\infty$ is a solution to $\dot{\theta}(t)  = \losst(\theta(t), \mu^\infty (t))$.
\end{lemma}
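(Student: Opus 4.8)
The plan is to pass to the limit in the integral form of the o.d.e.\ defining $\theta^n$. By construction $\theta^n(t) = \ct(t_n) + \int_0^t \losst(\theta^n(s), \mu^n(s))\, ds$, and (along the common subsequence already extracted) $\theta^n \to \theta^\infty$ in $\Cb([0,\infty),\Rb^d)$ while $\mu^n \to \mu^\infty$ in $\Ui$. It therefore suffices to show that each side of this identity converges to the corresponding side of $\theta^\infty(t) = \theta^\infty(0) + \int_0^t \losst(\theta^\infty(s), \mu^\infty(s))\, ds$; that this integral identity makes $\theta^\infty$ a genuine (Carath\'eodory) solution then follows because the right-hand integrand is measurable and, by (A2) together with the linear growth bound $\lVert\losst(\theta,\nu)\rVert \le K(1+\lVert\theta\rVert)$, bounded on every $[0,T]$, so that $\theta^\infty$ is absolutely continuous with a.e.\ derivative $\losst(\theta^\infty(\cdot),\mu^\infty(\cdot))$. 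The left-hand side converges pointwise, and $\ct(t_n)=\theta^n(0)\to\theta^\infty(0)$, so all the work lies in the integral term.

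To handle $\int_0^t \losst(\theta^n(s),\mu^n(s))\,ds \to \int_0^t \losst(\theta^\infty(s),\mu^\infty(s))\,ds$ I would split the integrand difference as
$$
\losst(\theta^n(s),\mu^n(s)) - \losst(\theta^\infty(s),\mu^\infty(s)) = \underbrace{\big[\losst(\theta^n(s),\mu^n(s)) - \losst(\theta^\infty(s),\mu^n(s))\big]}_{(\mathrm{I})} + \underbrace{\big[\losst(\theta^\infty(s),\mu^n(s)) - \losst(\theta^\infty(s),\mu^\infty(s))\big]}_{(\mathrm{II})}.
$$
For $(\mathrm{I})$: by (A2) all the iterates and all the limit trajectories stay inside a common (sample-path dependent) compact subset of $\Rb^d$, and the measures $\mu^n(s)$ are supported on a common compact set of state-action pairs; on these, $\losst(\cdot,\nu)$ is Lipschitz in $\theta$ with a single constant $L$ (Lemma~\ref{grad_loss_continuity} and the remark following it), so $\lVert(\mathrm{I})\rVert \le L\lVert\theta^n(s)-\theta^\infty(s)\rVert \to 0$ uniformly on $[0,T]$. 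For $(\mathrm{II})$: Lemma~\ref{ana_mu} (with its a.e.\ clause removed via a measurable selection) gives $\mu^n(s)\Rightarrow\mu^\infty(s)$ for every $s$, and $\loss(\theta^\infty(s),\cdot,\cdot)$ is continuous by Lemma~\ref{grad_loss_continuity}; multiplying it by a fixed continuous cutoff equal to $1$ on the compact set that supports every $\mu^n(s)$ — which alters no integral — turns it into a bounded continuous test function, whence $\losst(\theta^\infty(s),\mu^n(s)) = \int \loss(\theta^\infty(s),x,a)\,\mu^n(s,dx,da) \to \losst(\theta^\infty(s),\mu^\infty(s))$ for every $s$. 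Thus the integrand difference vanishes pointwise in $s$ and, being uniformly bounded on $[0,T]$, the dominated convergence theorem gives the convergence of the integrals. Combining this with the limit of the left-hand side yields the integral identity for $\theta^\infty$.

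I expect term $(\mathrm{II})$ to be the main obstacle: it requires turning convergence of $\mu^n$ in the function space $\Ui$ into honest weak convergence of $\mu^n(s)$ at (almost) every fixed time — which Lemma~\ref{ana_mu} supplies — and then legitimately treating $\loss(\theta^\infty(s),\cdot,\cdot)$ as a bounded continuous function on the compactification $\Si^n\times\Ai$, which is exactly what forces the tightness argument from (A2) and the cutoff trick. A minor bookkeeping point is that $L$ and the various bounds are sample-path dependent, but since the whole argument is carried out on a fixed sample path this is harmless; likewise the equicontinuity established earlier guarantees that the limit trajectory $\theta^\infty$ really is in $\Cb([0,\infty),\Rb^d)$, so the pointwise statements above are precisely what is needed.
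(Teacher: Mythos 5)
Your proposal is correct and follows essentially the same route as the paper: the same split of the integrand into a Lipschitz-in-$\theta$ term controlled by $\lVert\theta^n(s)-\theta^\infty(s)\rVert$ and a term handled by Lemma~\ref{ana_mu} plus the boundedness and continuity of $\loss(\theta^\infty(s),\cdot)$, followed by dominated convergence. Your extra remarks on the Carath\'eodory formulation and the compact-support cutoff are harmless elaborations of what the paper states more tersely via $\loss(\theta^\infty(s),\cdot)\in\Cb_b(\Sb\times\Ai)$.
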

\begin{proof}
Fix an arbitrary $T > 0$. We need to show that $$\sup \limits_{t \in [0,T]} \left\lVert \theta^n(t) - \theta^\infty(0) - \int \limits_{0}^t \losst(\theta^\infty(s), \mu^\infty(s)) \ ds \right\rVert \to 0. $$
Let us first consider the following:
\begin{equation}
\label{ana_eq_main}
\begin{split}
\left\lVert \theta^n(0) + \int \limits_{0}^t \losst(\theta^n(s), \mu^n(s)) \ ds - \theta^\infty(0) - \right. \\ \left. \int \limits_{0}^t \losst(\theta^\infty(s), \mu^\infty(s)) \ ds \right\rVert,
\end{split}
\end{equation}
\begin{multline}\label{ana_eq_l31}
\left\lVert \theta^n(0) - \theta^\infty(0) \right\rVert +   \left\lVert \int \limits_{0}^t \losst(\theta^n(s), \mu^n(s)) \ ds -  \right. \\ \left. \int \limits_{0}^t \losst(\theta^\infty(s), \mu^n(s)) \ ds\right\rVert +   \left\lVert \int \limits_{0}^t \losst(\theta^\infty(s), \mu^n(s)) \ ds - \right. \\ \left. \int \limits_{0}^t \losst(\theta^\infty(s), \mu^\infty(s)) \ ds \right\rVert.
\end{multline}
Next, we note the following:
\begin{enumerate}[label=(\Alph*)]
\item From Lemma~\ref{ana_mu} we have $\mu^n(s) \overset{\text{d}}{\implies} \mu^\infty (s)$ (converges in distribution on $\Sb \times \Ai$) for all $s \in [0,T]$.
\item From (A2), i.e., the stability of the algorithm, and the boundedness of $\loss$ as a function of $\theta$, we get $\loss(\theta^\infty(s), \cdotp)  \in \Cb_b (\Sb \times \Ai)$. Hence, as a consequence of note~(A), $\int \loss(\theta^\infty(s), x, a) \mu^n(s) \to \int \loss(\theta^\infty(s), x, a) \mu^\infty(s)$ for all $s \in [0,T]$.
\end{enumerate}
Using DCT, we get 
\begin{equation} \label{ana_eq_l34}
\begin{split}
  \left\lVert \int \limits_{0}^t \losst(\theta^\infty(s), \mu^n(s)) \ ds -  \int \limits_{0}^t \losst(\theta^\infty(s), \mu^\infty(s)) \ ds \right\rVert  \\ \to 0.
  \end{split}
\end{equation}
Further, it follows from the Arzela-Ascoli theorem that the convergence in \eqref{ana_eq_l34} is uniform over $[0,T]$. 

Since $\losst$ is locally Lipschitz continuous in $\theta$, we get
\begin{equation} \label{ana_eq_l32}
\begin{split}
\left\lVert \int \limits_{0}^t \losst(\theta^n(s), \mu^n(s)) \ ds - \int \limits_{0}^t \losst(\theta^\infty(s), \mu^n(s)) \ ds\right\rVert \\ \le
L \int  \limits_{0}^t \left\lVert \theta^n(s) - \theta^\infty(s) \right\rVert \ ds.
\end{split}
\end{equation}
As $\theta^n \to \theta^\infty$ uniformly over $[0,T]$, the l.h.s.\ of $\eqref{ana_eq_l32} \to 0$ uniformly over $[0,T]$. The discussion surrounding \eqref{ana_eq_l34} and \eqref{ana_eq_l32} implies that $\eqref{ana_eq_l31} \to 0$ and hence $\eqref{ana_eq_main} \to 0$, uniformly over $[0,T]$. As $T$ is \emph{arbitrary}, the lemma follows.
\end{proof}

To develop a better understanding of Deep Q-Learning, we need to study $\mu^\infty$, the limiting distribution over the state-action pairs. In the following lemma, we show that $\mu^\infty(t, dx \times \Ai)$ is stationary with respect to the state Markov process, $\forall \ t \ge 0$. Recall that $p(\cdotp \mid x,a,\theta)$ is the controlled transition kernel of the state Markov process. We use $p(\cdotp \mid x,\Ai,\theta)$ to denote the probability associated with transitioning out of state $x$ (when some action is picked). We use $p(dy \mid x, \Ai, \theta) \ \mu(dx \times \Ai)$ to denote $\int _\Ai p(dy \mid x, a, \theta) \ \mu(dx , da)$. In words, it represents the probability to transition from state $x$ to state $y$, given that $(x,a) \sim \mu$. 

\begin{lemma} \label{ana_marginal_lemma}
For all $t \in [0, \infty)$, 
$\mu^\infty (t, dy \times \Ai) = \int_{\Sb} p(dy \mid x, \Ai, \theta^\infty(t)) \ \mu^\infty (t, dx \times \Ai)$. In other words, the limiting marginal constitutes a stationary distribution over the state Markov process. Further, $\{\mu^\infty(t, dx , da )\}_{t \ge 0}$ is tight.
\end{lemma}
For a proof of this lemma, we refer to Appendix~\ref{sec_marginal_lemma}.

Tightness of $\{\mu^\infty(t, dx , da )\}_{t \ge 0}$ implies that it is relative compact in the Prokhorov metric. This property, combined with the stability of \eqref{dqn_1}, yields $\{n(k)\}_{k \ge 0} \subset \{n\}_{n \ge 0}$, such that both $\lim_{n(k) \to \infty} \overline{\theta}(t_{n(k)})$ and $\lim_{n(k) \to \infty} \mu(t_{n(k)}, dx , da)$ have limits in $\Rb^d$ and $\Pit(\Sb \times \Ai)$, respectively. The properties of these limits, let us call them $\overline{\theta}^\infty$ and $\overline{\mu}^\infty$, determine the long-term behavior of \eqref{dqn_1}. Lemmas \ref{gp_bound} to \ref{ana_marginal_lemma} were stated and proved to build up to the most important result of this paper, which concerns the limiting behavior of \eqref{dqn_1}. We state and prove this result below, followed by a discussion of its implications.

\begin{theorem}\label{main_thm}
Assuming (A1)--(A5), the limit $\overline{\theta}^\infty$ of the deep Q-learning algorithm, i.e., iteration \eqref{dqn_1}, is such that
$\losst(\overline{\theta}^\infty, \overline{\mu}^\infty) = 0$ and
$\overline{\mu}^\infty(dx \times \Ai)$ is a stationary distribution of the state Markov process $x$.
\end{theorem}
\begin{proof}
From previous lemmas we know that \eqref{dqn_1} tracks $\boldsymbol{\theta}$, a solution to the non-autonomous ODE $\dot{\theta}(t) = \losst (\theta(t), \mu^\infty (t))$. Further, there is a sample path dependent compact subset of $\mathbb{R}^d$, $\mathcal{K}$, such that $\boldsymbol{\theta}$ remains inside of it. This is because the algorithm is assumed to be stable, i.e., $\theta_n \in \mathcal{K} \ \forall {n \ge 0}$. To determine the limit of the algorithm, $\overline{\theta}^\infty$, we need $\lim_{t \to \infty} \boldsymbol{\theta}(t)$.

To analyze $\dot{\theta}(t) = \losst (\theta(t), \mu^\infty (t))$, we transform it into an autonomous ODE through the standard change of variables trick. For this, we define $s(t) \coloneqq \frac{t}{1 + t}$, then $\dot{s}(t)  = (1 - s(t))^2$ and $t = \frac{s(t)}{1 - s(t)}$. We get the following transformed autonomous ODE:
\begin{equation}\label{trans_ode_eq}
\begin{split}
(\dot{\theta}(t), \dot{s}(t)) = \\ \left(\losst \left(\theta(t), \mu^\infty \left(\frac{s(t)}{1 - s(t)} \right) \right), (1 - s(t))^2 \right).
\end{split}
\end{equation}
Before proceeding, we state the following useful theorem, paraphrased to suit our purpose:\\

\vspace*{0.1in}

\textbf{[Theorem 2, Chapter 6 of \cite{aubin2012differential}]} \textit{Let $F$ be a continuous map from a closed subset $\hat{\mathcal{K}} \subset \mathcal{X}$ to $\mathcal{X}$. Let $x(\cdotp)$ be a solution trajectory of $\dot{x}(t) = F(x(t))$, such that it is inside $\hat{\mathcal{K}}$. Then, the solution converges to $x^*$, an equilibrium of $F$.}

\vspace*{0.1in}

To utilize the theorem, we define the following: $\mathcal{X} \coloneqq \Rb^d \times [0,1]$, $\hat{\mathcal{K}} \coloneqq \mathcal{K} \times [0,1]$, and $F: \hat{\mathcal{K}} \to \mathcal{X}$ such that $F(\theta, s) \coloneqq \left(\losst \left(\theta, \mu^\infty \left(\frac{s}{1 - s} \right) \right), (1 - s)^2 \right)$. It now follows from the above theorem that the transformed ODE \eqref{trans_ode_eq} converges to $(\overline{\theta}^\infty, 1)$, an equilibrium of $F$. Further, $1$ is the unique equilibrium point of $(1 - s)^2$, and $\overline{\theta}^\infty$ is an equilibrium of $\losst(\overline{\theta}^\infty, \overline{\mu}^\infty)$, where $\lim_{t \to \infty} \mu^\infty(t) \overset{d}{\implies} \overline{\mu}^\infty$. We discussed the existence of the limit $\overline{\mu}^\infty$ in the paragraph before stating this theorem.

Lemma~\ref{ana_marginal_lemma} shows that $\mu^\infty(t)$ is a stationary distribution of the state Markov process $x$ for all $t \ge 0$, i.e., $$\mu^\infty (t, dy \times \Ai) = \int_{\Sb} p(dy \mid x, \Ai, \theta^\infty(t)) \ \mu^\infty (t, dx \times \Ai).$$
Letting $t \to \infty$ on both sides of the above equation yields
$$\overline{\mu}^\infty (dy \times \Ai) = \int_{\Sb} p(dy \mid x, \Ai, \overline{\theta}^\infty) \ \overline{\mu}^\infty (dx \times \Ai).$$
In other words, the marginal over the states, $\overline{\mu}^\infty(dx \times \Ai)$, is stationary with respect to the state process.
\end{proof}

\subsection{On practical implications of the theory}
The primary goal of Deep Q-Learning is to find the optimal DQN-weights $\theta^*$ such that $\argmax_{a \in \Ai} \ Q(x,a; \theta^*) = \argmax_{a \in \Ai} \ Q^*(x,a)$, where $Q^*$ is the optimal Q-function. This is achieved by minimizing the squared Bellman loss. Theorem~\ref{main_thm} states that the Deep Q-Learning algorithm given by \eqref{dqn_1} converges to $\overline{\theta}^\infty$, a local minimizer of the average squared Bellman loss. The averaging over state-action pairs is induced by the limiting measure $\overline{\mu}^\infty \in \Pit(\Sb \times \Ai)$. In particular, we have
\begin{equation} \label{avg_eq}
\int \loss(\overline{\theta}^\infty, x, a) \ \overline{\mu}^\infty (dx, da) \ = \ 0.
\end{equation}
Lemma~\ref{ana_marginal_lemma} states that the limiting marginal distribution $\overline{\mu}^\infty (dx \times \Ai)$, over the state space $\Sb$, is stationary. Deep Q-Learning is typically employed in complex environments with multiple stationary distributions. Since $\overline{\mu}^\infty$ captures the long-term behavior of the training process, it directly depends on the distribution of the data encountered during training. As the squared Bellman loss is minimized on average in accordance to $\overline{\mu}^\infty$, the quality of learning is entirely captured by $\overline{\mu}^\infty$. In particular, the trained DQN approximates the optimal Q-factors accurately for state-action pairs that are distributed in accordance to $\overline{\mu}^\infty$. Performance is therefore good when encountering states arising from the ``limiting marginal''.


Fix $a \in \Ai$ and let $\Sb(a)$ be a measurable subset of $\Sb$ such that $a$ is the optimal action associated with every $x \in \Sb(a)$. For the sake of illustration, we consider a scenario wherein $\overline{\mu}^\infty (\Sb(a) \times \Ai) > 0$ and $\overline{\mu}^\infty (\Sb(a) \times a) = 0$. Roughly speaking, the set of state-action pairs given by $\{(x,a) \mid x \in \Sb\}$ were not encountered during training. This could happen, for example, due to poor exploration-exploitation trade-offs, or due to improper initialization of the DQN weights. The Q-factors may hence be poorly approximated on $\Sb(a) \times a$, and the trained DQN-agent cannot be expected to take optimal actions in these states. This explains the observation that, in practice, Deep Q-Learning sometimes fails to generalize well beyond the data encountered during training. Existing literature (see e.g.\ \cite{yang2019theoretical, zou2019finite}) does not account for such behaviors. Since DQN is usually trained using a simulator, it may be possible to empirically estimate $\overline{\mu}^\infty$. This knowledge may help identify scenarios wherein DQN is undertrained, thereby avoiding circumstances like the one sketched above.

\section{Weakening (A4) to allow twice continuously differentiable non-squashing activation functions} \label{sec_activation}
The hitherto presented analysis accounts for DQN architectures with differentiable squashing activations. In this section, we discuss modifications to our analysis that allow for general activations as well. In particular, the modifications account for activations such as Sigmoid Linear Unit (SiLU), Gaussian Error Linear Unit (GELU), etc.

Let us begin by understanding the role of squashing activations in our analysis. In Lemma~\ref{gp_bound}, the squashing property is used to find a $x$-independent $\hat{L}$ such that $|Q(x,a; \theta)| \le \hat{L} \lVert \theta \rVert_2$. Note that Lemma~\ref{gp_bound} is true even when the activations are non-squashing, provided $\Sb$ is a compact metric space. Since (A2) states that $\sup \limits_{n \ge 0} \ \lVert x_n \rVert_2 < \infty$ a.s., there is a sample path dependent compact set $\Sb_c \subset \Sb$ such that $x_n \in \Sb_c$ $\forall \ n \ge 0$. Using this information, we may modify the statement of Lemma~\ref{gp_bound} as follows:
\begin{lemma} \label{gp_bound_modified}
$\forall \ \theta \in \Rb^d$ $\sup \limits_{ a \in \Ai} \left| Q(x,a; \theta) \right| \le \tilde{L} \lVert \theta \rVert_2$, and $\tilde{L} > 0$ is dependent on $x$. Further, there is a sample path dependent $\hat{L}$, independent of $x$, such that $\sup \limits_{x \in \Sb_c} \sup \limits_{ a \in \Ai} \left| Q(x,a; \theta) \right| \le \hat{L} \lVert \theta \rVert_2$, where $\Sb_c$ is as defined above.
\end{lemma}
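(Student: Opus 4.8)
The plan is to mirror the inductive, layer-by-layer traversal used in the proof of Lemma~\ref{gp_bound}; the only ingredient lost when the squashing hypothesis is dropped is the a priori uniform bound $c$ on activation outputs, and I would replace it by two observations. First, the twice continuously differentiable non-squashing activations at issue --- SiLU and GELU --- are each dominated by the identity, i.e.\ $|\sigma(y)| \le |y|$ for every $y \in \Rb$: for GELU this holds because the Gaussian c.d.f.\ factor lies in $[0,1]$, and for SiLU because $0 < 1/(1+e^{-y}) < 1$. Second, (A2) supplies compactness: by (A2)(b) there is a sample-path-dependent compact $\Sb_c \subset \Sb$ with $x_n \in \Sb_c$ for all $n$, and by (A2)(a) a sample-path-dependent compact $\mathcal{K} \subset \Rb^d$ with $\theta_n \in \mathcal{K}$ for all $n$; it is on these sets that the bound is actually needed in the sequel, which is the content of the qualifier ``sample-path dependent'' in the statement.

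For the first assertion, fix $x \in \Sb$ and traverse the network from the input layer. An input-layer output has the form $\sigma\!\left(\sum_i x_i \theta_i\right)$, and $|\sigma(\cdot)| \le |\cdot|$ together with Cauchy--Schwarz gives $\left|\sigma\!\left(\sum_i x_i\theta_i\right)\right| \le \lVert x \rVert_2 \lVert \theta \rVert_2$. Proceeding inductively through the finitely many hidden layers, each activation output is bounded --- via $|\sigma(\cdot)| \le |\cdot|$ and Cauchy--Schwarz at each node --- by a continuous function of $(x,\theta)$ that vanishes at $\theta = 0$; restricted to the compact weight-region $\mathcal{K}$, this function is majorized by a constant depending only on $x$ (through $\lVert x\rVert_2$) and on the sample path. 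Substituting these bounds into $Q(x,a;\theta) = \sum_{i=1}^{l(a)} \act_a(i)\,\theta_a(i)$ exactly as in Lemma~\ref{gp_bound}, $\left|Q(x,a;\theta)\right| \le \big(\max_i |\act_a(i)|\big)\,\lVert \theta_a \rVert_1 \le \big(\max_i |\act_a(i)|\big)\, l(a)\, \lVert \theta \rVert_2$, and folding the bound on $\max_i|\act_a(i)|$ and the width factor $l(a)$ into a single constant $\tilde L = \tilde L(x)$, gives $\sup_{a\in\Ai}\left|Q(x,a;\theta)\right| \le \tilde L \lVert\theta\rVert_2$.

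For the second assertion, I would upgrade this pointwise estimate to one uniform over $\Sb_c$. The constant $\tilde L(x)$ produced above is a continuous function of $x$ (it is built from $\lVert x \rVert_2$ and fixed architectural quantities), so $\hat L \coloneqq \sup_{x \in \Sb_c} \tilde L(x) < \infty$ by compactness of $\Sb_c$, and hence $\sup_{x\in\Sb_c}\sup_{a\in\Ai}\left|Q(x,a;\theta)\right| \le \hat L \lVert\theta\rVert_2$ with $\hat L$ independent of $x$ (though still sample-path dependent, via $\Sb_c$ and $\mathcal{K}$). I expect the only real obstacle to be precisely the place where this argument parts company with Lemma~\ref{gp_bound}: in the squashing case the hidden-layer outputs are bounded for free, whereas here one must carry a quantitative bound on those outputs through the composition, and it is essential both that the activations are identity-dominated and that (A2) confines $x$ and $\theta$ to compact sets, since it is only on such sets that the hidden outputs --- and hence $Q$ --- remain under linear control in $\lVert\theta\rVert_2$. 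The remainder is the bookkeeping already carried out for Lemma~\ref{gp_bound}.
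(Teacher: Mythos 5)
The paper offers no proof of this lemma at all: it is asserted on the strength of the preceding remark that (A2)(b) confines the states to a sample-path-dependent compact set $\Sb_c$, after which the computation of Lemma~\ref{gp_bound} is meant to be rerun with the uniform bound on activation outputs replaced by one valid on $\Sb_c$. Your proposal is essentially a fleshed-out version of that intended argument, so the route is the right one. One substantive point, however, is exposed by your own Cauchy--Schwarz chain: with identity-dominated activations each hidden layer contributes a further factor of $\lVert\theta\rVert_2$, so the unrestricted bound on $Q(x,a;\theta)$ is of order $\lVert\theta\rVert_2^{D}$ with $D$ the depth. For a network with at least one hidden layer there is therefore \emph{no} constant $\tilde L(x)$, independent of $\theta$, with $|Q(x,a;\theta)|\le\tilde L(x)\lVert\theta\rVert_2$ for \emph{all} $\theta\in\Rb^d$ (take every weight equal to $t$ and let $t\to\infty$). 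You quietly repair this by restricting $\theta$ to the sample-path-dependent compact set $\mathcal{K}$ from (A2)(a), which is the correct move and all that the downstream lemmas require, but you should say explicitly that the first assertion is being proved only for $\theta$ in that compact set (equivalently, that $\tilde L$ must otherwise be allowed to depend on $\theta$); as written, your proof claims the statement for all $\theta\in\Rb^d$ while establishing it only on $\mathcal{K}$.

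A secondary observation: once $x$ and $\theta$ are confined to $\Sb_c\times\mathcal{K}$, the identity-domination of SiLU and GELU does no further work. The activation outputs $\act_a(i)$ are continuous in $(x,\theta)$, hence bounded on the compact set $\Sb_c\times\mathcal{K}$ by some sample-path-dependent constant $c$, and the two-line computation of Lemma~\ref{gp_bound} then goes through verbatim with this $c$ in place of the squashing bound. That shorter argument also covers twice continuously differentiable non-squashing activations that are \emph{not} dominated by the identity (softplus, for instance, has $\sigma(0)=\log 2>0$), which is closer to the generality the section advertises than an argument tied to the specific form of SiLU and GELU.
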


Parts of the analysis using Lemma~\ref{gp_bound} must now be modified to use Lemma~\ref{gp_bound_modified}. Other Lemmata, for e.g., Lemma~\ref{grad_Qexp} do not change when using Lemma~\ref{gp_bound_modified} instead of Lemma~\ref{gp_bound}.

\section{Extension to account for experience replay} \label{sec_replay}
Now, we extend our analysis to account for experience replay, an idea that allows the RL agent to relearn from past experiences. Specifically, at time $T$, the agent has ready access to $\{(x_k,a_k,r(x_k ,a_k), x_{k+1})\}_{T-H+1 \le k \le T}$, the history of states encountered, actions taken, rewards received and transitions made. The optimal size of the experience replay $H$ is problem dependent, and tunable. At time $T$, to update the NN weights $\theta$, the agent first samples a mini-batch of size $\hat{H} < H$ from the experience replay and calculates the following average loss gradient:

\begin{equation*}
\begin{split}
 \frac{1}{\hat{H}} \sum \limits_{i=1}^{\hat{H}} \loss \left(\theta_T, x_{k(T,i)}, a_{k(T,i)} \right), \text{ where } \\ T-H+1 \le k(T, i) \le T.
\end{split}
\end{equation*}
The DQN weights are updated as follows:
\begin{equation} \label{dqn_1_mod}
\theta_{n+1} = \theta_n + \gamma(n) \left[  \frac{1}{\hat{H}} \sum \limits_{i=1}^{\hat{H}} \loss \left(\theta_n, x_{k(n,i)}, a_{k(n,i)} \right) \right].
\end{equation}
To analyze \eqref{dqn_1_mod}, we must redefine $\mu$. For $t \in [t_n, t_{n+1})$, redefine $\mu(t)$ to be the probability measure (on $\Sb \times \Ai$) that places a mass of $\nicefrac{1}{\hat{H}}$ on $(x_{k(n,i)}, a_{k(n,i)})$ for $1 \le i \le \hat{H}$. With the new definition of $\mu$, for $t = t_n$ we get: 
\begin{equation*}
\begin{split}
\losst (\ct(t), \mu(t)) = \int \loss (\ct(t), x, a) \ \mu(t) = \\ \frac{1}{\hat{H}} \sum \limits_{i=1}^{\hat{H}} \loss \left(\theta_n, x_{k(n,i)}, a_{k(n,i)} \right).
\end{split}
\end{equation*}
Emulating the proofs of the Lemmata up to Lemma~\ref{ana_ode} for the new $\mu$, shows that \eqref{dqn_1_mod} tracks a solution to the non-autonomous o.d.e. $\dot{\theta}(t) = \losst(\theta(t), \mu^\infty(t))$. Again, $\mu^\infty$ is a limit of the redefined measure process sequence $\{\mu([t,\infty))\}_{t \ge 0}$ in $\mathcal{U}$.

Lemma~\ref{ana_marginal_lemma} states the the limiting marginal measure process $\mu^\infty(t, dx \times \Ai)$ is stationary with respect to the state Markov process for every $t \ge 0$. For it to hold in the presence of experience replay we redefine $\xi_n$ and $\mathcal{F}_{n}$ as follows:
\begin{equation*}
\begin{split}
\xi_n \coloneqq \sum \limits_{m=0}^{n-1} \frac{1}{\hat{H}} \left[\sum \limits_{i=1}^{\hat{H}} ( f(x_{k(m,i) + 1}) - \right.\\ \left. \int f(y) p(dy \mid x_{k(m,i)}, a_{k(m,i), \theta_{k(m,i)}}) ) \right],
\end{split}
\end{equation*}
$\mathcal{F}_{n-1} = \sigma \left\langle x_m, a_m, \theta_m, \Xi_m \mid m \le n \right\rangle$ for $n \ge 1$, where $\{\Xi_n \}_{n \ge 0}$ is the random process associated with mini-batch sampling. Typically the mini-batches are all sampled independently over time, hence $\{\Xi_n \}_{n \ge 0}$ constitutes an independent sequence of random variables. With these modifications the rest the steps involved in the proof of Lemma~\ref{ana_marginal_lemma} may be readily emulated. This would directly lead to the statement of the main result, Theorem~\ref{main_thm}. In conclusion, Deep Q-Learning with experience replay, \eqref{dqn_1_mod}, converges to $\hat{\theta}^\infty$ such that $\loss(\hat{\theta}^\infty, \hat{\mu}^\infty) = 0$, where $\hat{\mu}^\infty$ is a limit of $\{\tilde{\mu}^\infty(t)\}_{t \ge 0}$ as $t \to \infty$, and $\tilde{\mu}^\infty$ is the limiting measure process of the redefined $\mu$-process. Again, $\hat{\mu}^\infty (dx \times \Ai)$ is stationary with respect to the state Markov process. 

It is a common belief among deep learning practitioners that experience replay plays an important role in stabilizing the DQN training. In regards to the long-term behavior, we show that the use of experience replay has a qualitative effect on learning. \emph{This is because the limiting measure $\tilde{\mu}^\infty$ is shaped by the mini-batches sampled from experience replay during training, and it is richer than the one resulting from no experience replay.}

\section{CONCLUSION}
In this paper, we presented an asymptotic analysis of Deep Q-Learning under practical and verifiable assumptions.
An important contribution is the complete characterization of the DQN performance as a function of training. We obtained this result by analyzing the limit of a closely associated measure process (on the state-action pairs). The result has various implications that we shall elaborate on more closely in future work. In particular, is helps explain empirical observations regarding the performance of Deep Q-Learning that current theory does not account for. Practically motivated extensions and generalizations like this one are also on our agenda of future work.

\bibliographystyle{plain}
\bibliography{references}

\begin{thebibliography}{10}

\bibitem{achiam2019towards}
Joshua Achiam, Ethan Knight, and Pieter Abbeel.
\newblock Towards characterizing divergence in deep q-learning.
\newblock {\em arXiv preprint arXiv:1903.08894}, 2019.

\bibitem{aubin2012differential}
J-P Aubin and Arrigo Cellina.
\newblock {\em Differential inclusions: set-valued maps and viability theory},
  volume 264.
\newblock Springer Science \& Business Media, 2012.

\bibitem{aubin2011viability}
Jean-Pierre Aubin, Alexandre~M Bayen, and Patrick Saint-Pierre.
\newblock {\em Viability theory: new directions}.
\newblock Springer Science \& Business Media, 2011.

\bibitem{billingsley2013convergence}
Patrick Billingsley.
\newblock {\em Convergence of probability measures}.
\newblock John Wiley \& Sons, 2013.

\bibitem{borkar2006stochastic}
Vivek~S. Borkar.
\newblock Stochastic approximation with ‘controlled {M}arkov’ noise.
\newblock {\em Systems \& control letters}, 55(2):139--145, 2006.

\bibitem{borkar2009stochastic}
Vivek~S Borkar.
\newblock {\em Stochastic approximation: a dynamical systems viewpoint},
  volume~48.
\newblock Springer, 2009.

\bibitem{cai2019provably}
Qi~Cai, Zhuoran Yang, Chi Jin, and Zhaoran Wang.
\newblock Provably efficient exploration in policy optimization.
\newblock {\em arXiv preprint arXiv:1912.05830}, 2019.

\bibitem{chen2019finite}
Zaiwei Chen, Sheng Zhang, Thinh~T Doan, John-Paul Clarke, and Siva
  Theja~Maguluri.
\newblock Finite-sample analysis of nonlinear stochastic approximation with
  applications in reinforcement learning.
\newblock {\em arXiv e-prints}, pages arXiv--1905, 2019.

\bibitem{du2020agnostic}
Simon~S Du, Jason~D Lee, Gaurav Mahajan, and Ruosong Wang.
\newblock Agnostic $ q $-learning with function approximation in deterministic
  systems: Near-optimal bounds on approximation error and sample complexity.
\newblock {\em Advances in Neural Information Processing Systems}, 33, 2020.

\bibitem{durrett2010probability}
Rick Durrett.
\newblock {\em Probability: Theory and Examples}.
\newblock Cambridge Series in Statistical and Probabilistic Mathematics.
  Cambridge University Press, 2010.

\bibitem{graupe2013principles}
Daniel Graupe.
\newblock {\em Principles of artificial neural networks}, volume~7.
\newblock World Scientific, 2013.

\bibitem{kim2019deepmellow}
Seungchan Kim, Kavosh Asadi, Michael Littman, and George Konidaris.
\newblock Deepmellow: removing the need for a target network in deep
  q-learning.
\newblock In {\em Proceedings of the Twenty Eighth International Joint
  Conference on Artificial Intelligence}, 2019.

\bibitem{kushner2003stochastic}
Harold Kushner and G~George Yin.
\newblock {\em Stochastic approximation and recursive algorithms and
  applications}, volume~35.
\newblock Springer Science \& Business Media, 2003.

\bibitem{lillicrap2016continuous}
Timothy~P Lillicrap, Jonathan~J Hunt, Alexander Pritzel, Nicolas Heess, Tom
  Erez, Yuval Tassa, David Silver, and Daan Wierstra.
\newblock Continuous control with deep reinforcement learning.
\newblock In {\em ICLR (Poster)}, 2016.

\bibitem{mnih2015human}
Volodymyr Mnih, Koray Kavukcuoglu, David Silver, Andrei~A Rusu, Joel Veness,
  Marc~G Bellemare, Alex Graves, Martin Riedmiller, Andreas~K Fidjeland, Georg
  Ostrovski, et~al.
\newblock Human-level control through deep reinforcement learning.
\newblock {\em nature}, 518(7540):529--533, 2015.

\bibitem{munkres}
James~R. {Munkres}.
\newblock {\em {Topology. 2nd ed.}}
\newblock Upper Saddle River, NJ: Prentice Hall, 2nd ed. edition, 2000.

\bibitem{ramaswamy2019optimization}
Arunselvan Ramaswamy, Adrian Redder, and Daniel~E Quevedo.
\newblock Optimization over time-varying networks with unbounded delays.
\newblock {\em arXiv preprint arXiv:1912.07055}, 2019.

\bibitem{silver2017mastering}
David Silver, Julian Schrittwieser, Karen Simonyan, Ioannis Antonoglou, Aja
  Huang, Arthur Guez, Thomas Hubert, Lucas Baker, Matthew Lai, Adrian Bolton,
  et~al.
\newblock Mastering the game of go without human knowledge.
\newblock {\em nature}, 550(7676):354--359, 2017.

\bibitem{wagner1977survey}
Daniel~H Wagner.
\newblock Survey of measurable selection theorems.
\newblock {\em SIAM Journal on Control and Optimization}, 15(5):859--903, 1977.

\bibitem{xu2020finite}
Pan Xu and Quanquan Gu.
\newblock A finite-time analysis of q-learning with neural network function
  approximation.
\newblock In {\em International Conference on Machine Learning}, pages
  10555--10565. PMLR, 2020.

\bibitem{yang2019theoretical}
Zhuora Yang, Yuchen Xie, and Zhaoran Wang.
\newblock A theoretical analysis of deep q-learning.
\newblock {\em arXiv preprint arXiv:1901.00137}, 2019.

\bibitem{yegnanarayana2009artificial}
Bayya Yegnanarayana.
\newblock {\em Artificial neural networks}.
\newblock PHI Learning Pvt. Ltd., 2009.

\bibitem{zou2019finite}
Shaofeng Zou, Tengyu Xu, and Yingbin Liang.
\newblock Finite-sample analysis for sarsa with linear function approximation.
\newblock In {\em Advances in Neural Information Processing Systems}, pages
  8668--8678, 2019.

\end{thebibliography}

\section{Appendix: Preliminaries: Reinforcement Learning (RL)} \label{sec:preliminaries_RL}

In RL an \emph{agent} interacts with an \emph{environment} over time, via \emph{actions}. It takes the current (environment) \emph{state} into consideration to pick an action, and receives a feedback in terms of a \emph{reward}. The environment then moves to a new state. This is schematically represented in Figure~\ref{RL_MDP}. \emph{The goal in RL is to ensure that the agent takes a sequence of actions, such that the rewards accumulated over time are maximized}. 
\begin{figure}[h]
\begin{center}
\includegraphics[width=.48 \textwidth]{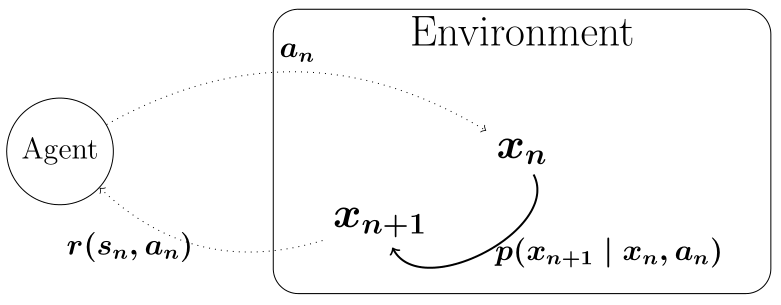}
\caption{Snapshot of interaction at step $n$}
\label{RL_MDP}
\end{center}
\end{figure}

Formally speaking, the above stated interactions can be modelled as a Markov Decision Process (MDP). It is defined as a $5$-tuple $(\Sb, \Ai, p, r, \alpha)$, where:
\begin{itemize}
\item[$\Sb$] is the state space. In typical applications $\Sb \equiv \Rb^k$, $k >0$. 
\item[$\Ai$] is the action space. In this paper, $\Ai$ is a discrete finite set.
\item[$p$] is the ``controlled'' transition kernel. We use $p(\cdotp \mid x,a)$ to represent the distribution of the next state given the current state and action.
\item[$r$] is the reward function. In particular, $r(x,a)$ denotes the reward associated with taking action $a$ at state $x$.
\item[$\alpha$] is the discount factor with $0 < \alpha \le 1$. It is used to discount the relevance of future consequences of actions.
\end{itemize}
A policy $\pi$ is defined as a function from $\Sb$ to $\Ai$. Given $\pi$, we can associate a \emph{Value function} $V^\pi(x)$ with each $x \in \Sb$, with $V^\pi(x) \coloneqq \mathbb{E} \left[ \sum \limits_{n \ge 0} \alpha^n r(x_n, \pi(x_n)) \mbox{\Large $\vert$} x_0 = x \right]$. The goal in RL can be restated to find $\pi^*$ such that $V^{\pi^*} (x) = \max \limits_{\pi} V^\pi (x)$ for all $x \in \Sb$. In Dynamic Programming parlance $\pi^*$ is a solution to the \emph{infinite horizon discounted reward problem}. 

Closely related to the value function is the concept of Q-function, defined over state-action pairs $(x,a) \in \Sb \times \Ai$ by
$
Q^{\pi}(x,a) \coloneqq r(x,a) + \alpha \int V^{\pi}(x') \ p(dx' \mid x,a),
$
where $\pi$ is a fixed policy. The optimal Q-function is defined as: $$Q^*(x,a) \coloneqq r(x,a) + \alpha \int V^{\pi^*}(x') \ p(dx' \mid x,a).$$
Clearly, $\max \limits_{a \in \Ai} Q^*(x,a) = V^{\pi^*}(x)$ and $\pi^*(x) = \underset{a \in \Ai}{\argmax}\ Q^*(x,a)$ for all $x \in \Sb$. Hence, in order to find $\pi^*$ it is sufficient to find $Q^*$. This is the idea behind Q-Learning. Its variant, Deep Q-Learning, has shown tremendous promise in solving complex problems involving continuous state spaces, where Q-Learning typically fails. It involves parameterizing the optimal Q-function using a DNN, called the Deep Q-Network (DQN). The goal is to find the optimal set of parameters (DQN weights) $\theta^*$, \emph{by interacting with the environment}, such that $Q(x,a; \theta^*) \approx Q^*(x,a)$ for all $(x,a) \in \Sb \times \Ai$. The DQN is trained to minimize the following squared Bellman loss over all state-action pairs $(x,a)$: $$\left[r(x,a) + \alpha \int \max \limits_{a' \in \Ai} Q(x',a'; \theta) \ p(dx' \mid x,a) -  Q(x,a; \theta) \right]^2.$$


\section{Appendix:
Technical lemmas supporting Lemma~\ref{grad_loss_continuity}} \label{sec:aux_lemmas}
Let us recall that every action is associated with a different output layer: $Q(x,a; \theta) = \sum \limits_{i=1}^{l(a)} \act_a(i) \theta_a(i)$, with $l(a)$ the width of the layer associated with action $a$.
\begin{lemma} \label{gp_bound}
$\sup \limits_{x \in \Sb, \ a \in \Ai} \left| Q(x,a; \theta) \right| \le \hat{L} \lVert \theta \rVert_2$, for some $\hat{L} > 0$.
\end{lemma}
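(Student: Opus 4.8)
The plan is to exploit the squashing property of the activations (assumption (A4)) to bound the outputs $\act_a(i)$ appearing in the representation $Q(x,a;\theta) = \sum_{i=1}^{l(a)} \act_a(i)\theta_a(i)$ uniformly over $x$, $a$ and $\theta$, and then to control the remaining (linear) dependence on $\theta$ by the triangle inequality together with the equivalence of the $\ell_1$ and $\ell_2$ norms on finite-dimensional spaces.

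First I would observe that, since the DQN is built from finitely many distinct activation functions and each of them is squashing, there is a constant $M > 0$ such that every activation output lies in $[-M, M]$; in particular $|\act_a(i)| \le M$ for every $i \le l(a)$, every $a \in \Ai$, every input $x \in \Sb$ and every weight vector $\theta$. This is the only place assumption (A4) is used. Applying the triangle inequality to $Q(x,a;\theta) = \sum_{i=1}^{l(a)} \act_a(i)\theta_a(i)$ then gives $|Q(x,a;\theta)| \le M \sum_{i=1}^{l(a)} |\theta_a(i)| = M \lVert \theta_a \rVert_1$, where $\theta_a \coloneqq (\theta_a(i))_{i=1}^{l(a)}$ is the subvector of $\theta$ collecting the network weights into the output layer associated with action $a$.

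Next I would use $\lVert \theta_a \rVert_1 \le \sqrt{l(a)}\,\lVert \theta_a \rVert_2 \le \sqrt{l(a)}\,\lVert \theta \rVert_2$, the last inequality holding because $\theta_a$ is obtained from $\theta$ by deleting coordinates. Combining the estimates, $|Q(x,a;\theta)| \le M \sqrt{l(a)}\,\lVert \theta \rVert_2$. Since $\Ai$ is finite, $\hat{L} \coloneqq M \max_{a \in \Ai} \sqrt{l(a)}$ is a well-defined finite positive constant, and taking the supremum over $x \in \Sb$ and $a \in \Ai$ yields the claim.

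There is no serious obstacle here; the one point that deserves care is to note that the bound on $\act_a(i)$ is genuinely uniform in $x$ and $\theta$ — it depends only on the finite family of squashing activations and not on the preactivation values — so that the supremum over $x \in \Sb$ may be taken despite $\Sb$ being unbounded. The bias terms dropped by the standing convention would contribute only a harmless additive constant and do not affect the argument.
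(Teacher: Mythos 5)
Your proof is correct and follows essentially the same route as the paper: bound the activation outputs uniformly via the squashing property, apply the triangle inequality to get $M\lVert\theta_a\rVert_1$, and pass to $\lVert\theta\rVert_2$ by norm equivalence. Your version is in fact slightly tidier — you use the sharp factor $\sqrt{l(a)}$, explicitly note $\lVert\theta_a\rVert_2\le\lVert\theta\rVert_2$, and take $\max_{a\in\Ai}$ so that $\hat{L}$ is genuinely independent of $a$, a detail the paper's proof elides.
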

\begin{proof}
We begin by noting that activation functions considered herein are also squashing. Hence, absolute values of their outputs are bounded by some $0 < c < \infty$.
Let us fix arbitrary $x \in \Sb$ and $a \in \Ai$, then $$\left| Q(x,a; \theta)\right| \le c \sum \limits_{i=1}^{l(a)} \left| \theta_a(i) \right|  =  c \lVert \theta_a \rVert_1,$$ where $\lVert \cdotp \rVert_1$ is the 1-norm. It now follows from $\lVert \theta_a \rVert_1 \le l(a) \lVert \theta_a \rVert_2$, that
$\left| Q(x,a; \theta)\right| \le c l(a)  \lVert \theta_a \rVert_2$. If we let $\hat{L} \coloneqq c\ l(a)$, then the statement of the lemma follows.
\end{proof}

Since we allow for possibly unbounded $Q$-factors, the above lemma indicates that we need arbitrarily large DQN weights for good approximation. Depending on the system states encountered during training, the Deep Q-Learning algorithm explores an appropriate subspace associated with the weight vector. Hence, the approximation capability of the trained DQN depends on the state-action pairs encountered during training. The difference, in state distributions, between the training and test scenarios will determine performance.

Recall that we parameterize the $Q$-function using a neural network that consists of twice continuously differentiable activation functions. Hence, $Q$ may be viewed as a composition of twice continuously differentiable activations, and the DQN weight vector. In other words, $Q$ itself is twice continuously differentiable. This intuition is formalized in the next lemma.

\begin{lemma} \label{grad_2cd}
$Q(x, a; \theta)$ is twice continuously differentiable in the $\theta$-coordinate for every $x \in \Sb$ and $a \in \Ai$, where $\theta$ is the DQN weight vector.
\end{lemma}
\begin{proof}
Recall that the DQN weights are updated using the back propagation algorithm, i.e., the chain rule. Given the DQN weight-vector $\theta \in \Rb^d$, we need to show that $\Large{ \nicefrac{\partial ^2 Q(\hat{x}, \hat{a}; \theta)}{\partial \theta_i ^2} }$ exists and is continuous for $1 \le i \le d$. Also, recall from the \emph{note on tunable biases} at the end of Section~\ref{DQN_prelim}, that without loss of generality we may only consider tunable edge weights, and ignore tunable bias terms.

Let us fix an arbitrary $\hat{x} \in \Sb$, $\hat{a} \in \Ai$ and $i \in \{1, \ldots, d\}$. DQN weight $\theta_i$ is associated with an edge of the NN. Also associated with this edge is another weight $\ei{e}_i \coloneqq \act_i \theta_i$, where $\act_i$ is the output of an activation from the previous layer (from the head of the edge). This is illustrated in Fig.~\ref{grad_nn}.

\begin{figure}[h]
\begin{center}
\includegraphics[width=.48 \textwidth]{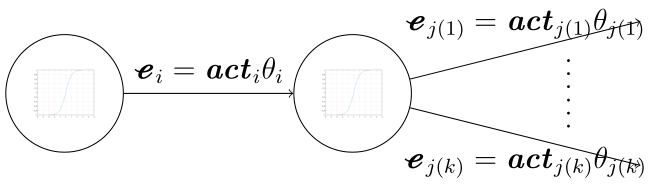}
\caption{Section of a DNN}
\label{grad_nn}
\end{center}
\end{figure}



To prove the lemma, we show something stronger, i.e., that both ${\Large \nicefrac{\partial ^2 Q(\hat{x}, \hat{a}; \theta)}{\partial \theta_i ^2} }$ and ${\Large \nicefrac{\partial ^2 Q(\hat{x}, \hat{a}; \theta)}{\partial \ei{e}^2}}$ are continuous. The proof involves inducting on the depth of the DNN, starting from the output layer, and going backwards. Note that $Q(\hat{x}, \hat{a}; \theta)  = \sum \limits_{i=1}^{l(\hat{a})} \ei{e}_{\hat{a}(i)}$, where $\ei{e}_{\hat{a}(i)} \coloneqq \boldsymbol{act}_{\hat{a}(i)}  \ \theta_{\hat{a}(i)}$, where $l(\hat{a})$ is the number of activations in the output layer of action $\hat{a}$. Also, note that $\act_{\hat{a}(i)}$ is the output of the $i$-th activation in the output layer associated with action $\hat{a}$, and $\theta_{\hat{a}(i)}$ is the corresponding network edge-weight, $1 \le i \le l(\hat{a})$, see Section~\ref{DQN_prelim} for details. We have, ${\Large \nicefrac{\partial ^2 Q(\hat{x}, \hat{a}; \theta)}{\partial \theta_{a(i)}^2} } = {\Large \nicefrac{\partial ^2 Q(\hat{x}, \hat{a}; \theta)}{\partial \ei{e}_{a(i)} ^2}} = 0$ for all $a \neq \hat{a}$, where \emph{subscript $a(i)$ is used to indicate that $\theta_{a(i)}$ and $\ei{e}_{a(i)}$ are associated with the output layer of action $a$}. Twice continuous differentiability with respect to $\theta_{\hat{a}(i)}$ and $\ei{e}_{\hat{a}(i)}$ directly follows from the same property of the activation units, $1 \le i \le l(\hat{a})$. 

Let us assume that the hypothesis is true for weights associated with edges out of the $(l+1)^{st}$ layer and prove for the $l^{th}$ layer. Fig.~\ref{grad_nn} illustrates an edge out of an $l^{th}$ layer activation, and its associated weight $\ei{e}_i \coloneqq \act_i \ \theta_i$, where $i \in \{1, \ldots, d\}$. Also note that, in the Fig.~\ref{grad_nn}, $\act_{j(m)} = \act_0$ for all $1 \le m \le k $. It follows directly from the back-propagation algorithm (chain rule) that:
$$
\frac{\partial Q(\hat{x}, \hat{a}; \theta)}{\partial \ei{e}_i} = \frac{\partial \act_0}{\partial \ei{e}_i} \sum \limits_{m=1}^k \  \left[ \frac{\partial Q(\hat{x}, \hat{a}; \theta)}{\partial \ei{e}_{j(m)}} \theta_{j(m)} \right],
$$
\begin{equation*}
\begin{split}
\frac{\partial^2 Q(\hat{x}, \hat{a}; \theta)}{\partial \ei{e}_i^2} = \left(\frac{\partial \act_0}{\partial \ei{e}_i} \right)^2 \sum \limits_{m=1}^k \  \left[ \frac{\partial^2 Q(\hat{x}, \hat{a}; \theta)}{\partial \ei{e}_{j(m)}^2} \theta_{j(m)} ^2 \right] + \\ \frac{\partial^2 \act_0}{\partial \ei{e}_i^2} \sum \limits_{m=1}^k \  \left[ \frac{\partial Q(\hat{x}, \hat{a}; \theta)}{\partial \ei{e}_{j(m)}} \theta_{j(m)} \right].
\end{split}
\end{equation*}
From the induction hypothesis and the twice continuous differentiability of $\act_0$, we get that $\frac{\partial^2 Q(\hat{x}, \hat{a}; \theta)}{\partial \ei{e}_{i}^2}$ is continuous. Next, we observe: 
$$
\frac{\partial Q(\hat{x}, \hat{a}; \theta)}{\partial \theta_{i}} = \frac{\partial Q(\hat{x}, \hat{a}; \theta)}{\partial \ei{e}_{i}} \frac{\partial \ei{e}_{i}}{\partial \theta_{i}} = \act_{i} \frac{\partial Q(\hat{x}, \hat{a}; \theta)}{\partial \ei{e}_{i}},
$$

$$
\frac{\partial^2 Q(\hat{x}, \hat{a}; \theta)}{\partial \theta_{i}^2} = \left( \act_{i} \right)^2 \frac{\partial^2 Q(\hat{x}, \hat{a}; \theta)}{\partial \ei{e}_{i}^2}.
$$
The continuity of {\Large $\frac{\partial^2 Q(\hat{x}, \hat{a}; \theta)}{\partial \theta_{i}^2}$} follows from the twice continuous differentiability of {\Large $\frac{\partial^2 Q(\hat{x}, \hat{a}; \theta)}{\partial \ei{e}_{i}^2}$}. 
\end{proof}

Since $Q$ is two times continuously differentiable in the $\theta$-coordinate, it is locally Lipschitz continuous in that coordinate. Also, the Lipschitz constant may depend on $x$, in addition to $\theta$. Let us fix arbitrary $\hat{a} \in \Ai$ and $\hat{\theta} \in \Rb^d$. Since $Q(\cdotp, \hat{a}; \hat{\theta})$ and $\nabla_\theta Q(\cdotp, \hat{a}; \hat{\theta})$ are composed (via addition and multiplication) of twice continuously differentiable functions (activation units), we get that both $Q$ and $\nabla_\theta Q$ are continuous in the $x$-coordinate. Although we do not need it here, the stronger property of local Lipschitz continuity may also be shown. Finally, note that $Q$ and $\nabla_\theta Q$ are continuous in the $a$-coordinate, since $\Ai$ is finite.

\begin{lemma} \label{grad_Qexp}
The following map is continuous and locally Lipschitz continuous in the $\theta$-coordinate:
$$
(x,a,\theta) \mapsto \int \max \limits_{a \in \Ai} Q(x',a;\theta) \ p(dx' \mid x,a,\theta) \, .
$$

\end{lemma}
\begin{proof}
We begin by fixing arbitrary $\hat{x} \in \Sb$ and $\hat{a} \in \Ai$. Given $\theta \in \Rb^d$, Lemma~\ref{grad_2cd} implies the existence of $\mathcal{N}(\theta, \hat{x})$, without loss of generality a compact neighborhood of $\theta$, and $L(\theta, \hat{x}) > 0$, such that $\forall \ \theta_1, \theta_2 \in \mathcal{N}(\theta, \hat{x})$:
$$
\left| Q(\hat{x}, \hat{a}; \theta_1) - Q(\hat{x}, \hat{a}; \theta_2) \right| \le L(\theta, \hat{x}) \lVert \theta_1 - \theta_2 \rVert_2.
$$
Since $\hat{a}$ is fixed, $p(\cdotp \mid \hat{x}, \hat{a}, \theta) \equiv p(\cdotp \mid \hat{x}, \hat{a})$, i.e., the transition kernel does not depend on $\theta$. Recall that the dependence of $p$ on $\theta$ is only via the action $a$. Define $a_1(x) \coloneqq \underset{a \in \Ai}{\text{argmax}} \ Q(x, a; \theta_1)$, then following the above line of thought (with ``$x$'' replacing ``$\hat{x}$'' and ``$a_1(x)$'' replacing ``$\hat{a}$'') we get:
\begin{equation} \label{grad_3_1}
\begin{split}
& \left| \max \limits_{a \in \Ai} Q(x, a; \theta_1)  - \max \limits_{a \in \Ai} Q(x, a; \theta_2) \right| \\ 
& \qquad \le \left| Q(x, a_1(x); \theta_1) -  Q(x, a_1(x); \theta_1) \right| \\
& \qquad \le L(\theta, x) \, \lVert \theta_1 - \theta_2 \rVert.
\end{split}
\end{equation}
Hence, from Lemma~\ref{gp_bound} and the compactness of $\mathcal{N}(\theta, \hat{x})$, we conclude that
$$
\sup \limits_{\hat{\theta} \in \mathcal{N}(\theta, \hat{x})} \ \sup \limits_{x \in \Sb} \ \sup \limits_{a \in \Ai} \left| Q(x,a; \hat{\theta}) \right| < \infty.
$$
In particular, there exists a bounded measurable function $\hat{F}_\theta: x \mapsto L(x, \theta)$ such that \eqref{grad_3_1} is satisfied for every $x \in \Sb$, with $\hat{F}_\theta (x)$ as the Lipschitz constant.
Hitherto presented arguments and observations yield:
\begin{equation}
\begin{split}
& \left| \int \max \limits_{a \in \Ai} Q(x, a; \theta_1) p(dx \mid \hat{x}, \hat{a}, \theta_1) - \right. \\ 
& \qquad \left. \int \max \limits_{a \in \Ai} Q(x, a; \theta_2) p(dx \mid \hat{x}, \hat{a}, \theta_2) \right| \le \\ & \quad \leq \lVert \theta_1 - \theta_2 \rVert_2 \int L(\theta, x) p(dx \mid \hat{x}, \hat{a}) \le L \lVert \theta_1 - \theta_2 \rVert_2,
\end{split}
\end{equation}
where $L = 2 \times \sup \limits_{\hat{\theta} \in \mathcal{N}(\theta, \hat{x})} \ \sup \limits_{x \in \Sb} \ \sup \limits_{a \in \Ai} \left| Q(x,a; \hat{\theta}) \right|$.

Let us fix arbitrary $\hat{\theta} \in \Rb^d$ and $\hat{a} \in \Ai$. Define $\hat{a}(x) \in \underset{a \in \Ai}{\text{argmax}}\ Q(x,a; \hat{\theta})$ and $\hat{Q}(x) \coloneqq Q(x, \hat{a}(x), \hat{\theta})$ for all $x \in \Sb$. Note that there may be many actions that maximize the $Q$ function, $\hat{a}(\cdotp)$ selecting one of them. First, we show that $x_n \to x$ implies that $\hat{Q}(x_n) \to \hat{Q}(x)$, and hence that $\hat{Q} \in \Cb _b(\Sb)$ (from Lemma~\ref{gp_bound}). To this end, we show that every subsequence of $\{\hat{Q}(x_n)\}_{n \ge 0}$ has a further subsequence that converges, and the limit always equals $\hat{Q}(x)$. Let us begin by considering the entire sequence itself. Since $\Ai$ is a compact metric space, $\exists \ \{n(m)\}_{m \ge 0} \subset \{n \}_{n \ge 0}$ such that $\hat{a}(x_{n(m)}) \to \hat{a}$ for some $\hat{a} \in \Ai$, hence $Q(x_{n(m)}, \hat{a}(x_{n(m)})); \hat{\theta}) \to Q(x, \hat{a}; \hat{\theta})$. We claim that $\hat{a} = \hat{a}(x)$, thus implying $\hat{Q}(x_{n(m)}) \to \hat{Q}(x)$.
To see that the claim is true assume the contrary. In other words, $\hat{a}(x) \neq \hat{a}$ and $Q(x, \hat{a}(x); \hat{\theta}) > Q(x, \hat{a}; \hat{\theta}) + \epsilon$, for some $\epsilon > 0$. From the continuity of $Q$, we get that $\exists \ M > 0$ with $\left|Q(x_{n(m)}, \hat{a}(x); \hat{\theta}) - Q(x, \hat{a}(x); \hat{\theta})  \right| \le \nicefrac{\epsilon}{4}$ and $\left| Q(x_{n(m)}, \hat{a}(x_{n(m)}); \hat{\theta}) - Q(x, \hat{a}; \hat{\theta})  \right| \le \nicefrac{\epsilon}{4}$, for all $m \ge M$. Hence, we get that $Q(x_{n(m)}, \hat{a}(x); \hat{\theta}) > Q(x_{n(m)}, \hat{a}(x_{n(m)}); \hat{\theta})$, a contradiction. Finally, we note that the above set of arguments can be repeated starting with any subsequence of $\{n\}_{n \ge 0}$.

Now that we have $\hat{Q} \in \Cb _b(\Sb)$, we are ready to prove continuity in the $x$-coordinate. Recall that we have assumed the transition kernel to be continuous in $x$. Hence $x_n \to x$ implies that $p(\cdotp \mid x_n, \hat{a}, \hat{\theta}) \overset{d}{\implies} p(\cdotp \mid x, \hat{a}, \hat{\theta})$, i.e., the kernels converge in distribution. It now follows from the definition of ``convergence in distribution'' that $\int \hat{Q}(y) p(dy \mid x_n, \hat{a}, \hat{\theta})  \to \int \hat{Q}(y) p(dy \mid x, \hat{a}, \hat{\theta})$. In other words, we have the required, namely 

\begin{equation*}
\begin{split}
x_n \to x \implies \int \max \limits_{a \in \Ai}Q(y, a, \hat{\theta}) p(dy \mid x_n, \hat{a}, \hat{\theta})  \to \\  \int \max \limits_{a \in \Ai} Q(y, a, \hat{\theta}) p(dy \mid x, \hat{a}, \hat{\theta})
\end{split}
\end{equation*}
as $n \to \infty$ 
Finally, recall that $\Ai$ is compact metrizable as it is a finite. Hence continuity in the $a$-coordinate is trivial.
\end{proof}



\section{Appendix: Missing Proofs}\label{sec:missing_proofs_2}
\subsection{Proof of Lemma~\ref{lemma:trajectory_track}} \label{sec:trajectory_track}
\begin{proof}
First we define the notation $[t]$ for $t \ge 0$ as $[t] \coloneqq t_{\sup \{n \mid t_n \le t\}}$. Next, we need to show that: $$
\sup \limits_{t \in [0,T]} \lVert \ct(t_n +t) - \ct([t_n + t]) \rVert \in \Theta(\gamma(n)).
$$
For this, we fix $t \in [0,T]$, then $[t_n + t] = t_{n+k}$ for some $k \ge 0$. Recall that 
$$
\ct(t_n + t) = \ct(t_{n+k}) + \frac{t_n + t - t_{n+k}}{\gamma(n+k)} \left( \ct(t_{n+k+1}) - \ct(t_{n+k}) \right).
$$
We use the following: $\left\lVert \ct(t_{n+k+1}) - \ct(t_{n+k}) \right\rVert \le \gamma(n+k) \lVert \loss(\ct(t_{n+k}), x_{n+k}; a_{n+k}) \rVert$; the stability of the algorithm, i.e., (A2); the monotonic property of the step-size sequence, i.e., (A1); and the boundedness of $\loss$ as a function of $\theta$, to obtain $\lVert \ct(t_n + t) - \ct(t_{n+k}) \rVert \in \Theta(\gamma(n))$. Similarly, let us show that:
$$
\sup \limits_{t \in [0,T]} \lVert \theta^n(t) - \theta^n ([t_n + t] - t_n) \rVert \in \Theta(\gamma(n)).
$$
Again, $[t_n + t] = t_{n+k}$ for some $k \ge 0$. We also have $\lVert \theta^n(t) - \theta^n (t_{n+k} - t_n) \rVert  = \left\lVert \int \limits_{t_{n+k} - t_n}^t \losst(\theta(s), \mu^n(s)) \ ds \right\rVert$. Using arguments similar to the ones made before, the required statement directly follows. It follows from all of the above arguments that it is enough to show the following in order to prove the lemma:
$$
 \sup \limits_{t \in [0,T]} \lVert \ct([t_n + t]) - \theta^n([t_n + t] -  t_n) \rVert \to 0.
$$
Once again we let $[t_n + t] = t_{n+k}$ for some $k \ge 0$, and observe that 
\begin{equation*}
\begin{split}
\lVert \ct([t_n + t]) - \theta^n([t_n + t] -  t_n) \rVert \le \\ \sum \limits_{m=n}^{n+k-1} \int \limits_{t_m}^{t_{m+1}} \lVert \losst(\ct([s]), \mu^n(s-t_n)) - \\ \losst(\theta^n (s - t_n), \mu^n(s - t_n))) \rVert \ ds,
\end{split}
\end{equation*}
\begin{equation*}
\begin{split}
\lVert \ct([t_n + t]) - \theta^n([t_n + t] -  t_n) \rVert \le \\ \sum \limits_{m=n}^{n+k-1} \int \limits_{t_m}^{t_{m+1}} L \left\lVert \ct([s]) - \theta^n (s - t_n) \right\rVert.
\end{split}
\end{equation*}
Adding and subtracting $\theta^n([s] - t_n)$, the R.H.S. of above equation is less than or equal to
\begin{equation*}
\begin{split}
\sum \limits_{m=n}^{n+k-1} L \int \limits_{t_m}^{t_{m+1}} \left\lVert \theta^n (s - t_n) - \theta^n ([s] - t_n) \right\rVert
+ \\ \sum \limits_{m=n}^{n+k-1} L \int \limits_{t_m}^{t_{m+1}} \left\lVert \ct([s]) - \theta^n ([s] - t_n) \right\rVert.
\end{split}
\end{equation*}
Considering that $\lVert \ct(t_n + t) - \ct(t_{n+k}) \rVert$ and $\lVert \theta^n(t) - \theta^n ([t_n + t] - t_n) \rVert \in \Theta(\gamma(n))$, we get $\sum \limits_{m=n}^{n+k-1} \int \limits_{t_m}^{t_{m+1}} \left\lVert \theta^n (s - t_n) - \theta^n ([s] - t_n) \right\rVert \le \sum \limits_{m=n}^{n+k-1} \Theta(\gamma(m)^2)$, which goes to zero as $n \to \infty$.
Now we use the discrete version of Gronwall's inequality to get:
\begin{equation*}
\begin{split}
\lVert \ct([t_n + t]) - \theta^n([t_n + t] -  t_n) \rVert \le \\ \left( L \sum \limits_{m=n}^{n+k-1} \Theta(\gamma(m))^2 \right) \exp(LT).
\end{split}
\end{equation*}
\end{proof}

\subsection{Proof of Lemma~\ref{ana_marginal_lemma}} \label{sec_marginal_lemma}
\begin{proof}
Pick $f$ from $\Cb_{b}(\Sb)$, the convergence determining class for $\Pit(\Sb)$. Without loss of generality, we assume that $0 \le f \le 1$. We define the following zero mean Martingale with respect to the filtration $\mathcal{F}_{n-1} \coloneqq \sigma \left\langle x_m, a_m, \theta_m \mid m \le n \right\rangle$, for $n \ge 1$:
\begin{equation} \label{eq_l41}
\xi_n \coloneqq \sum \limits_{m=0}^{n-1} \gamma(m) \left[ f(x_{m+1}) - \int_\Sb f(y) p(dy \mid x_m, a_m, \theta_m) \right].
\end{equation}
Since $f$ is bounded and $\sum \limits_{n \ge 0} \gamma(n)^2 < \infty$, the quadratic variation process associated with the above Martingale is convergent. It follows from the Martingale Convergence Theorem \cite{durrett2010probability} that $\xi_n$ converges almost surely. Hence for $t > 0$,
\begin{equation} \label{eq_l42}
\sum \limits_{m=n}^{\tau(n,t)} \gamma(m) \left[ f(x_{m+1}) - \int_\Sb f(y) p(dy \mid x_m, a_m, \theta_m) \right] \to 0 \ a.s.,
\end{equation}
where $\tau(n,t) \coloneqq \min\{m \ge n \mid t_m \ge t_n + t\}$. Since the steps-sizes are eventually decreasing, hence $\sum \limits_{m=n}^{\tau(n,t)} [\gamma(m) - \gamma(m+1) ] f(x_{m+1}) \to 0$ a.s. Then \eqref{eq_l42} becomes:
\begin{equation} \label{eq_l43}
\sum \limits_{m=n}^{\tau(n,t)} \gamma(m) \left[ f(x_m) - \int_\Sb f(y) p(dy \mid x_m, a_m, \theta_m) \right] \to 0 \ a.s.
\end{equation}
Using the definition of $\mu$, we rewrite \eqref{eq_l43} as:
\begin{equation} \label{eq_l44}
\begin{split}
\int_{t_n}^{t_n + t} \int_{\Sb \times \Ai} \left[ f(x) - \int_\Sb f(y) p(dy \mid x,a, \ct(s)) \right] \mu(s, dx,da) ds \\ \to 0 \ a.s.
\end{split}
\end{equation}
Let us define a new function $\hat{f}(x,a) \coloneqq f(x)$ for all $(x,a) \in \Sb \times \Ai$, then $\hat{f} \in \Cb_b (\Sb \times \Ai)$. Since $\mu(t_n + \cdotp) \to \mu^\infty(\cdotp)$ in $\Ui$, it follows that as $n \to \infty$:
\begin{equation} \label{eq_l45}
\begin{split}
\int_{t_n}^{t_n + t} \int_{\Sb \times \Ai} \hat{f}(x,a) \mu(s, dx, da) ds \to \\ \int_{0}^{ t} \int_{\Sb \times \Ai} \hat{f}(x,a) \mu^\infty(s, dx, da) ds.
\end{split}
\end{equation}
Further, the limit in \eqref{eq_l45} equals $\int_{0}^{ t} \int_{\Sb} f(x) \mu^\infty(s, dx \times \Ai) ds$.

Recall that $(x,a,\theta) \mapsto p(\cdotp \mid x,a,\theta)$ is a continuous map. Since $f$ is a convergence determining function in $\Pit(\Sb)$, it follows that $\int _\Sb f(y) p(dy \mid x,a,\ct(s)) \to \int _\Sb f(y) p(dy \mid x,a,\theta^\infty(s))$ for all $s \in [0,t]$. Define $h_n(s,x,a) \coloneqq \int _\Sb f(y) p(dy \mid x,a,\ct(t_n + s))$ and $h_\infty(s,x,a) \coloneqq \int _\Sb f(y) p(dy \mid x,a,\theta^\infty(s))$. For a fixed $s \in [0,t]$, $h_n(s, \cdotp)$, $n \ge 0$, and $h_\infty(s, \cdotp)$ belong to $\Cb_b(\Sb \times \Ai)$. Hence,
\begin{equation}
\begin{split}
\int _{\Sb \times \Ai} h_n(s, x,a) \mu(t_n + s,dx,da) \to \\ \int _{\Sb \times \Ai} h_\infty(s, x,a) \mu^\infty(s,dx,da).
\end{split}
\end{equation}
It then follows from Dominated Convergence Theorem (DCT) \cite{durrett2010probability} that:
\begin{equation}
\begin{split}
\int_{t_n}^{t_n + t}  \int _{\Sb \times \Ai} h_n(s, x,a) \mu(s,dx,da) ds \to \\ \int_{0}^{t}  \int _{\Sb \times \Ai} h_\infty(s, x,a) \mu^\infty(s,dx,da) ds.
\end{split}
\end{equation}
In other words, we have
\begin{multline} \label{eq_l46}
\int_{t_n}^{t_n + t}  \int _{\Sb \times \Ai} \int _\Sb f(y) p(dy \mid x,a,\ct(s)) \mu(s,dx,da) ds \to \\ \int_{0}^{t}  \int _{\Sb \times \Ai} \int _\Sb f(y) p(dy \mid x,a,\theta^\infty(s)) \mu^\infty(s,dx,da) ds.
\end{multline}
From \eqref{eq_l44}, \eqref{eq_l45} and \eqref{eq_l46} we get:
\begin{equation} \label{eq_l47}
\begin{split}
\int_{0}^{ t} \int_{\Sb \times \Ai} f(x) \mu^\infty(s, dx, da) ds = \\ \int_{0}^{t}  \int _{\Sb \times \Ai} \int _\Sb f(y) p(dy \mid x,a,\theta^\infty(s)) \mu^\infty(s,dx,da) ds.
\end{split}
\end{equation}
Using Lebesgue's theorem we get that a.e. on [0,t]:
\begin{equation*}
\begin{split}
\int_{\Sb \times \Ai} f(x) \mu^\infty(s, dx, da) = \\  \int _{\Sb \times \Ai} \int _\Sb f(y) p(dy \mid x,a,\theta^\infty(s)) \mu^\infty(s,dx,da).
\end{split}
\end{equation*}
Applying Fubini's theorem \cite{durrett2010probability} to swap the double integral on the R.H.S. of the above equation, gives us:
\begin{equation*}
\begin{split}
\int_{\Sb} f(x) \mu^\infty(s, dx, \Ai) = \\ \int _{\Sb} f(y) \int _\Sb p(dy \mid x,\Ai,\theta^\infty(s)) \mu^\infty(s,dx,\Ai).
\end{split}
\end{equation*}
Since $f$ is a convergence determining function, we get that $\mu^\infty(s, dy, \Ai) = \int _\Sb p(dy \mid x,\Ai,\theta^\infty(s)) \mu^\infty(s,dx,\Ai)$. Hence, we have shown that the limiting distribution over the state-action pairs $\mu^\infty$ is such that, almost everywhere on $[0, \infty)$, its marginal over the state space constitutes a stationary distribution over the state Markov process with transition kernel $p(\cdotp \mid x, \Ai, \theta)$.

Now, it is left to show that the family of measures $\{\mu^\infty(t, dx , da )\}_{t \ge 0}$ is tight. From previous discussions and observations, given $t \ge 0$, we can find $\{n(m)\}_{m \ge 0} \subset \{n\}_{n \ge 0}$ such that
$$
\lim \limits_{n(m) \to \infty} \mu(t_{n(m)}, dx , da) \overset{d}{\implies} \mu^\infty(t, dx , da).
$$
Using the Portmanteau Theorem \cite{billingsley2013convergence}, we get $\mu^\infty(t, \mathcal{K} \times \Ai') \ge \limsup \limits_{n(m) \to \infty} \mu(t_{n(m)}, \mathcal{K} \times \Ai')$, where $\mathcal{K} \subset \Sb$ is compact and $\Ai' \subset \Ai$. Given $\epsilon > 0$, there exists $\mathcal{K}(\epsilon) \subset \Sb$, compact, such that $\inf \limits_{m \ge 0} \mu(t_{n(m)}, \mathcal{K}(\epsilon) 
\times \Ai') \ge 1 - \epsilon$ for any $\Ai' \subset \Ai$, as $\mu(t_{n(m)})_{m \ge 0}$ is tight. Hence $\mu^\infty(t, \mathcal{K}(\epsilon) \times \Ai') \ge 1 - \epsilon$. As $t$ was arbitrary, we get that $\{\mu^\infty(t, dx , da )\}_{t \ge 0}$ is tight. 
\end{proof}

\end{document}